\newcommand\cH{\mathcal{H}}
\newcommand\cX{\mathcal{X}}
\newcommand\cY{\mathcal{Y}}
\newcommand\cF{\mathcal{F}}
\newcommand\cS{\mathcal{S}}
\newcommand{\cJ}{\mathcal{J}}
\newcommand{\cA}{\mathcal{A}}
\newcommand{\E}{\mathop{\mathbb{E}}}
\DeclareMathOperator*{\argmin}{\mathrm{argmin}}
\theoremstyle{plain}
\newtheorem{theorem}{Theorem}[section]
\newtheorem{lemma}[theorem]{Lemma}
\theoremstyle{definition}
\newtheorem{definition}[theorem]{Definition}
\newtheorem{remark}[theorem]{Remark}
\title{Monotone Individual Fairness}
\author{Yahav Bechavod\thanks{Department of Computer and Information Sciences, University of Pennsylvania. Email: \texttt{yahav@seas.upenn.edu}.}}
\begin{document}
\maketitle

\begin{abstract}
We revisit the problem of online learning with individual fairness, where an online learner strives to maximize predictive accuracy while ensuring that similar individuals are treated similarly. We first extend the frameworks of \citet{GillenJKR18,BechavodJW20}, which rely on feedback from human auditors regarding fairness violations, as we consider auditing schemes that are capable of aggregating feedback from any number of auditors, using a rich class we term \emph{monotone aggregation functions}. We then prove a characterization for this function class, practically reducing the analysis of auditing for individual fairness by multiple auditors to that of auditing by (instance-specific) single auditors. Using our generalized framework, we present an oracle-efficient algorithm achieving an upper bound of $\mathcal{O}(\sqrt{T})$ for regret and $\mathcal{O}(T^\frac{3}{4})$ for the number of fairness violations (and more generally, a frontier of $(\mathcal{O}(T^{\frac{1}{2}+2b}),\mathcal{O}(T^{\frac{3}{4}-b}))$ for regret, number of violations, for $0\leq b \leq 1/4$). We then study an online classification setting where label feedback is available for positively-predicted individuals only, and present an oracle-efficient algorithm achieving an upper bound of $\mathcal{O}(T^\frac{2}{3})$ for regret and $\mathcal{O}(T^\frac{5}{6})$ for the number of fairness violations (and more generally, a frontier of $(\mathcal{O}(T^{\frac{2}{3}+2b}),\mathcal{O}(T^{\frac{5}{6}-b}))$ for regret, number of violations, for $0\leq b \leq 1/6$). In both settings, our algorithms improve on the best known bounds for oracle-efficient algorithms. Furthermore, our algorithms offer significant improvements in computational efficiency, greatly reducing the number of required calls to an (offline) optimization oracle per round, to $\tilde{\mathcal{O}}(\alpha^{-2})$ in the full information setting, and $\tilde{\mathcal{O}}(\alpha^{-2} + k^2T^\frac{1}{3})$ in the partial information setting, where $\alpha$ is the sensitivity for reporting fairness violations, and $k$ is the number of individuals in a round. This stands in contrast to previous algorithms which required making $T$ such oracle calls every round.
\end{abstract}

\newpage
\tableofcontents
\newpage

\section{Introduction}

As algorithms are increasingly ubiquitous in variety of domains where decisions are highly consequential to human lives --- including lending, hiring, education, and healthcare --- there is by now a vast body of research aimed at formalizing, exploring, and analyzing different notions of fairness, and suggesting new algorithms capable of obtaining them in conjunction with high predictive accuracy. The majority of this body of work takes a \emph{statistical group fairness} approach, where a collection of groups in the population is defined (often according to ``protected attributes''), and the aim is to then approximately equalize a chosen statistic of the predictor (such as overall error rate, false positive rate, etc.) across them. From the perspective of the \emph{individual}, however, group fairness notions fail to deliver meaningful guarantees, as they are \emph{aggregate} in nature, only binding over averages over many people. This was also pointed out by \citet{DworkHPRZ12} original ``catalog of evils''.

Furthermore, the majority of the work in algorithmic fairness follows \emph{statistical data generation} assumptions, where data points are assumed to arrive in i.i.d. fashion from a distribution, in either a batch setting, an online setting, or a bandit setting. Many domains where fairness is a concern, however, may not (and often do not) follow such assumptions, due to, for instance: (1) strategic effects (e.g. individuals attempting to modify their features to ``better fit'' a specific policy in hopes of receiving more favorable outcomes, or individuals who decide whether to apply based on the policy which was deployed) (see, e.g., \citet{reportcards,newyorktest,slipperyfish,Pollution}), (2) distribution shifts over time (e.g. the ability to repay a loan may be affected by changes to the economy or recent events), (3) adaptivity to previous decisions (e.g. if an individual receives a loan, that may affect the ability to repay future loans by this individual or her vicinity), (4) one-sided label feedback (a college can only track the academic performance of students who have been admitted in the first place).

The seminal work of \citet{DworkHPRZ12} advocates for taking a different view, approaching fairness from the perspective of the individual. In the core of  their formulation is the assertion that ``similar individuals should be treated similarly''. Formally, they require that a (randomized) predictor obey a Lipschitz condition, where similar predictions are made on individuals deemed similar, according to a task specific \emph{metric}. As \citet{DworkHPRZ12} acknowledge, however, the availability of such metrics is one of the most challenging aspects in their framework. In many domains, it seems, it remains unclear how such metrics can be elicited or learned.

A recent line of work, starting with \citet{GillenJKR18}, suggests an elegant framework aiming at the above two issues precisely, as they study an \emph{adversarial} online learning problem, where the learner receives additional feedback from an \emph{auditor}. Specifically, the auditor is tasked with identifying fairness \emph{violations} (pairs of individuals who he deems similar, and were given very different assessments) made by the learner, and reporting them in real time. In their framework, they assume the metric according to which the auditor reports his perceived violations is \emph{unknown} to the learner. They assert that while in many cases, enunciating the \emph{exact} metric might be a difficult task for the auditor, he is likely to ``know unfairness when he sees it''. More generally, \citet{GillenJKR18} operate in a linear contextual bandit setting, and the goal in their setting is to achieve low regret while also minimizing the number of fairness violations made by the learner. Importantly, they assume that the metric takes a specific parametric form (Mahalanobis distance), and that the auditor must identify \emph{all} existing violations.

Their framework has since been extended by \citet{BechavodJW20}, who studied the problem absent a linear payoff structure, dispensed with the need to make parametric assumptions about the metric (in fact, their formulation even allows for a similarity function which does not take \emph{metric} form), allowed for different auditors at different timesteps, and only required any auditor to report a \emph{single} violation, in case one or more exist. Finally, \citet{BechavodRoth23} further extended the framework by exploring majority-based auditing schemes, capable of incorporating feedback from multiple auditors, with potentially conflicting opinions, and studying the problem under partial information.

In this work, we make progress on both the conceptual and technical fronts of learning with individual fairness. We first introduce a novel framework for auditing for unfairness, which generalizes upon the ones in previous works (\citet{GillenJKR18,BechavodJW20,BechavodRoth23}), and is based on detecting violations by applying a rich class of aggregation functions on feedback from multiple auditors. In particular, our framework will allow for a \emph{different} number and identity of auditors at each timestep, and different aggregation functions. Using our framework, we present new oracle-efficient algorithms for both the full information and partial information settings of online learning with individual fairness. Our algorithms are based on carefully combining the objectives of accuracy and fairness at \emph{dynamically-decided} rates, which allow us to improve on the best known bounds in both settings \citep{BechavodJW20,BechavodRoth23}. Importantly, our algorithms greatly reduce the computational complexity of previous approaches, as we present a new approach and analysis based on distinguishing between the tasks of fairness constraint elicitation, and accuracy-fairness objective minimization.

\subsection{Overview of Results}
We next provide an overview of our results and a roadmap for the paper. We identify a natural class of individual fairness auditing schemes we term \emph{monotone auditing schemes}, which is capable of leveraging feedback from any number of auditors regarding fairness violations, and aggregate it using a rich class of aggregation functions. We then provide a characterization for such auditing schemes, essentially reducing the analysis of auditing by multiple auditors to auditing by (instance-specific) single auditors (Section \ref{sec:auditing}).

We define an online learning framework with individual fairness violations feedback from monotone auditing schemes, generalizing the ones in \citet{GillenJKR18,BechavodJW20, BechavodRoth23} (Section \ref{sec:online}). We then define a Lagrangian loss function, which is able, on every timestep, to carefully combine the objectives of accuracy and fairness at a dynamically-decided rate (Section \ref{sec:sim}).

Using the Lagrangian formulation, we present an oracle-efficient algorithm, based on a reduction involving two algorithms: Context-FTPL \citep{SyrgkanisKS16} and Online Gradient Descent \citep{Zinkevich03}, which guarantees a bound of $\mathcal{O}(\sqrt{T})$ for regret and $\mathcal{O}(T^\frac{3}{4})$ for the number of fairness violations. Importantly, our construction will only require making $\tilde{\mathcal{O}}({\alpha^{-2}})$ calls to an optimization oracle on every round, where $\alpha$ is the required sensitivity for detecting fairness violations. Thus improving on the best known bounds and oracle complexity by \citet{BechavodJW20}. (Section \ref{sec:alg}). 

We then consider a more challenging setting where label feedback is available for positively-predicted individuals only. We present an oracle-efficient algorithm based on leveraging the Lagrangian formulation along with a reduction to Context-Semi-Bandit-FTPL \citep{SyrgkanisKS16} and Online Gradient Descent \citep{Zinkevich03}, which guarantees a bound of $O(T^\frac{2}{3})$ for regret and $O(T^\frac{5}{6})$ for fairness violations, while only requiring making $\tilde{\mathcal{O}}(\alpha^{-2} + k^2T^\frac{1}{3})$ calls to an optimization oracle on every round, where $\alpha$ is as explained above, and $k$ is the number of individuals to be predicted in a round. Thus improving on the best known bounds and oracle complexity in this setting, by \citet{BechavodRoth23}. (Section \ref{sec:partial}). 

We conclude with a discussion and directions for future research (Section \ref{sec:conclusion}).

\subsection{Related Work}

Our work is primarily related to two strands of research: individual fairness, and online learning with long-term constraints. As elaborated on in the introduction, the seminal work of \citet{DworkHPRZ12} introduced the notion of individual fairness. They leave open the question of obtaining the similarity metric. \citet{YonaR18} study an offline setting where the metric is assumed to be \emph{known}, and suggest algorithms for learning predictors that give PAC-style accuracy and individual fairness guarantees. \citet{KimRR18} study a group-relaxation of individual fairness in a batch setting with access to a (noisy) oracle specifying distances between groups. \citet{Ilvento} suggests learning the metric using a combination of comparison and distance queries to auditors. Our framework will not require querying numerical distance queries. \citet{JungKNRSW21} study a batch setting, eliciting similarity constraints from a set of ``stakeholders'', and prove generalization bounds for both accuracy and fairness. Finally, as elaborated on in the introduction, our work is closely related to \citet{GillenJKR18,BechavodJW20,BechavodRoth23}.

For the problem of online convex optimization with a static, \emph{known ahead of time}, set of constraints, \citet{Zinkevich03} first proposed (projection-based) online gradient descent. In addition to requiring perfect knowledge of the constraints (rather than only having access to reported \emph{violations} in hindsight), online gradient descent requires a projection step on each round, which may be computationally demanding if the set of constraints is complex. The problem of online learning with long-term constraints, hence, offers a \emph{relaxation} with respect to constraint violation --- the learner's goal is to minimize her regret, while being allowed to violate the constraints \emph{at a vanishing rate}. Works in this field consider three main scenarios: constraints that are static and are known ahead of time \citep{mahdavi12a,jenatton16,YuanLamperski18,YuNeely2020}, or arrive in real time, after prediction is made; either stochastically \citep{YuNW17,Wei20}, or adversarially \citep{MannorTY09,Sun17a,ChenLG17,LiakopoulosDPSM19,ChenG19,CaoL19,YiLXJ20}. Finally, \citet{Castiglioni22} offers a unifying, ``best-of-both-worlds'', approach to the stochastic / adversarial cases.

In our setting, however, the learner \emph{will not know} the set of constraints at any round --- before and even \emph{after} prediction is made --- (as they will be held \emph{implicitly} by the auditors) but will rather have weaker access, only through reported fairness \emph{violations}. Additionally, the literature on online learning with long-term constraints primarily pertains to online convex optimization. When instantiated over the simplex over a set of experts (as will be in our case, with a hypothesis class $\cH$), the proposed algorithms in this literature generally require maintaining and updating on each round the set of weights on $\cH$ \emph{explicitly}, which can be computationally prohibitive for large hypothesis classes. We hence strive to develop \emph{oracle-efficient} algorithms, which, given access to an (offline) optimization oracle, will dispense us of the need to explicitly maintain and update these weights. That said, our work is related to 
\citet{mahdavi12a,jenatton16,Sun17a}, as we define a Lagrangian loss formulation similar to the one they use. We however extend their analysis (in particular, the one in \citet{Sun17a}), and provide a different reduction technique, to give an \emph{oracle-efficient} algorithm and account for limited access to both the deployed policy and the elicited constraint violations (as we elaborate on in Section \ref{sec:online}).

We refer the reader to Appendix \ref{app:extended} for an extended related work section.

\section{Individual Fairness and Monotone Auditing Schemes} \label{sec:auditing}
We begin by defining notation we will use throughout this work. We denote a feature space by $\cX$, and a label space by $\cY$. We will focus on the case where $\cX = \mathbb{R}^d$, and $\cY = \{0,1\}$. We denote by $\cH:\cX\rightarrow\cY$ a hypothesis class of binary predictors, and assume that $\cH$ contains a constant classifier. For the purpose of achieving more favorable trade-offs between accuracy and fairness, we will allow a learner to deploy \emph{randomized} predictors from $\Delta\cH:\cX\rightarrow [0,1]$. In the settings we will focus on, $\cX$ will generally consist of features pertaining to human individuals (e.g. income, repayment history, debt), and $\cY$ will encode a target variable a learner wishes to predict correctly (e.g. defaulting on payments). From here on, we will denote $k$-tuples (corresponding to $k$ individuals) of features and labels by $\bar{x} = (\bar{x}^1,\dots,\bar{x}^k) \in \cX^k$, $\bar{y} = (\bar{y}^1,\dots,\bar{y}^k) \in \cY^k$.

Next, we define a fairness violation, following the notion of individual fairness by \citet{DworkHPRZ12}.

\begin{definition}[Fairness violation]\label{def:violation}
Let $\alpha\geq0$ and let $d:\cX\times\cX\rightarrow[0,1]$.\footnote{
 $d$ represents a function specifying the auditor's judgement of the ``similarity'' between individuals in a specific context. We do not require that $d$ be a metric: only that it be non-negative and symmetric.} We say that a policy $\pi\in\Delta\cH$ has an $\alpha$-fairness violation (or simply ``$\alpha$-violation'') on $(x,x')\in\cX^2$  with respect to $d$ if
\[
\pi(x)-\pi(x') > d(x,x') + \alpha.
\]
where $\pi(x) = \Pr_{h \sim \pi}[h(x) = 1]$.
\end{definition}

Note that Definition \ref{def:violation} also encodes the \emph{direction} of the violation (which individual received the higher prediction), as this will be important in our construction.\footnote{Technically speaking, since the learner will know the predictions $\pi(x),\pi(x')$, the auditor only has to report the (unordered) pair $\{x,x'\}$ in case he perceives a violation has occurred on it --- the direction of the violation can then be inferred by the learner, since she knows which of $x,x'$ was given a higher prediction under $\pi$. It will nevertheless be convenient in our construction to explicitly incorporate the direction in the definition of a fairness violation.}

We next define a fairness auditor, having access to a set of individuals and their assigned predictions, tasked with reporting his perceived violations. 

\begin{definition}[Auditor]
We define a fairness auditor $j:\Delta\cH\times\cX^k\times\mathbb{R}^+\rightarrow \{0,1\}^{k\times k}$ as
\[
\Big[j(\pi,\bar{x},\alpha)\Big]_{l,r} := \begin{cases}
1 & \pi(\bar{x}^l)-\pi(\bar{x}^r) > d^j(\bar{x}^l,\bar{x}^r) + \alpha\\
0 &\text{otherwise}
\end{cases},
\]
where $d^j:\cX\times\cX\rightarrow[0,1]$ is auditor $j$'s (implicit) distance function. if $j(\pi,\bar{x},\alpha) = 0^{k\times k}$, we define $\vec{j}(\pi,\bar{x},\alpha) := \text{Null}$. Otherwise, we define $\vec{j}(\pi,\bar{x},\alpha) := (\bar{x}^l,\bar{x}^r)$, where $(l,r)\in[k]^2$ are (arbitrarily) selected such that $\left[j(\pi,\bar{x},\alpha)\right]_{l,r} = 1$. We denote the space of all such auditors by $\cJ$.
\end{definition}

\begin{remark}
In its most general form, an auditor returns a k-by-k matrix encoding his objections with respect to a specific policy on a set of individuals. We will later discuss notable cases where there is no requirement for the auditor to actually enunciate the entire matrix, but rather only detect the existence of a \emph{single} violation, in case one or more exist.
\end{remark}

\subsection{Monotone Individual Fairness Auditing Schemes}

So far, our formulation of auditing for individual fairness follows the ones in \citet{GillenJKR18,BechavodJW20}, which only support auditing by a \emph{single} auditor. \cite{BechavodRoth23} suggested an extension of this approach to majority-based auditing schemes over multiple auditors. In this work, we present a more general approach, that will allow us to aggregate over the preferences of multiple auditors, using a rich class of aggregation functions we define next. For this purpose, we will consider functions $f:{\left(\{0,1\}^{k\times k}\right)}^m\rightarrow \{0,1\}^{k\times k}$ that map the outputs of multiple auditors $\bar{j} = (j^1,\dots,j^m)\in\cJ^m$ into a single output matrix. We denote the space of all such functions by $\cF$. We proceed to define an auditing scheme, which takes as input the judgements of a panel of auditors, and decides on which pairs a fairness violation has occurred, according to a predefined aggregation function.

\begin{definition}[Auditing scheme]\label{def:audit}
Let $m\in \mathbb{N}\setminus\{0\}$. We define an auditing scheme $\cS:\Delta\cH\times\cX^k\times\mathbb{R}^+\times\cF\times\cJ^m\rightarrow \{0,1\}^{k\times k}$ as
\[
\cS(\pi,\bar{x},\alpha,f,\bar{j}) := f(j^1(\pi,\bar{x},\alpha),\dots,j^m(\pi,\bar{x},\alpha)).
\]
If $\cS(\pi,\bar{x},\alpha,f,\bar{j}) = 0^{k\times k}$, we define $\vec{\cS}(\pi,\bar{x},\alpha,f,\bar{j}) := \text{Null}$. Otherwise, we define $\vec{\cS}(\pi,\bar{x},\alpha,f,\bar{j}) := (\bar{x}^l,\bar{x}^r)$, where $(l,r)\in[k]^2$ are (arbitrarily) selected such that $\left[\cS(\pi,\bar{x},\alpha,f,\bar{j})\right]_{l,r} = 1$. We denote the space of all such auditing schemes by $\bar{\cS}$.

\end{definition}

As we are particularly interested in individual fairness auditing schemes, we will henceforth restrict our attention to a subclass of $\cF$, where the value of each entry in the aggregate matrix is only affected by the corresponding entries in all of the input matrices, and aggregation of individual auditors outputs is done in a similar manner, regardless of individuals' position in $\bar{x}$. 

\begin{definition}[Independent aggregation functions] We define the class $\cF^{Ind}\subseteq \cF$ of independent aggregation functions as functions of the form
\[
\forall(l,r)\in[k]^2:\left[f(A^1,\dots,A^m)\right]_{l,r} = \bar{f}(A^1_{l,r},\dots,A^m_{l,r}),
\]
where $A^1,\dots,A^m \in \{0,1\}^{k\times k}$, $\bar{f}:\{0,1\}^m\rightarrow \{0,1\}$.
\end{definition}

We will next consider the case where $A^1,\dots,A^m$ are the output matrices of auditors $j^1,\dots,j^m$, respectively.

Restricting our attention to $\cF^{Ind}$, however, still seems insufficient. In particular, we would like to avoid cases where the outcome of the aggregation function changes from 1 to 0 as the number of objecting auditors increases (for example, consider $f$ that is defined such that $\bar{f} = 1$ if and only if \emph{exactly} one of the $m$ auditors objects). To remedy this, we will focus on independent aggregation functions that are \emph{monotone} --- which we next formally define. We begin by defining an ordering over the space of all possible objection profiles by a set of $m$ auditors on a fixed pair of individuals $(x^l,x^r)$.

\begin{definition}[Aggregation order] \label{def:order}
Let $v,v'\in\{0,1\}^m$. We say that $v'$ constitutes a stronger objection profile than $v$, and denote $v \preccurlyeq v'$, if $\forall i \in [m], v_i \leq v'_i$.
\end{definition}

Intuitively, $v \preccurlyeq v'$ if and only if every auditor who objected to the predictions of $\pi$ on $(x^l,x^r)$ with sensitivity level $\alpha$ resulting in $v$, still objects the predictions of a policy $\pi'$ on $(x^l,x^r)$ with sensitivity level $\alpha'$ resulting in $v'$. This will be the case in two important scenarios: when we fix $\pi$ and decrease the auditors' sensitivity to $\alpha'<\alpha$ for reporting violations, or alternatively when we fix $\alpha$ and consider $\pi'$ such that $\pi'(x^l) - \pi'(x^r) > \pi(x^l) - \pi(x^r)$.

Next, we define the class of monotone aggregation functions (and schemes) in line with the discussion above.

\begin{definition} [Monotone aggregation functions] \label{def:monotone} We define the class $\cF^{Mon}\subseteq\cF^{Ind}$ of monotone aggregation functions, as functions $f\in\cF^{Ind}$ such that 
\[
\forall v,v'\in \{0,1\}^m: v\preccurlyeq v' \implies \bar{f}(v) \leq \bar{f}(v').
\]
\end{definition}

\begin{definition} [Monotone auditing scheme] \label{def:monotone-scheme} We say an auditing scheme $\cS$ is monotone if it uses an aggregation function $f\in\cF^{Mon}$.\footnote{Apart from being a natural and desirable quality for auditing schemes, we will later see how monotonicity will also play an important role in the analysis of our algorithms (in particular, in Lemma \ref{lem:unfairness} and Lemma \ref{lem:baseline} in Section \ref{sec:online}).}\textsuperscript{,}\footnote{Monotone aggregation schemes have also been studied in social choice theory (see, e.g. \cite{Woodall97,Ornstein13}) in the context of voting rules.}
\end{definition}

\subsection{Characterizing Monotone Individual Fairness Auditing Schemes}
In what follows, we prove a characterization of monotone auditing schemes, when auditors are queried for individual fairness violations. As we will see, querying specifically for such violations, in combination with an aggregation scheme that is monotone, will imply that for every pair of individuals $(x^l,x^r)$, the aggregate decision will always be equivalent to the decision of the same \emph{single} auditor, regardless of the deployed policy and selected sensitivity parameter. In what follows, we will use $j^0$, $j^{m+1}$ to denote ``dummy'' auditors, with respective distance functions: $\forall x,x'\in\cX^2$, $d^{j^0}(x,x') = 0$, $d^{j^{m+1}}(x,x') = 1$. $j^0$ hence objects to any non-identical predictions made on any two individuals, while $j^{m+1}$ never objects to any predictions.

\begin{lemma} \label{lem:monotone}
Let $\cS$ (fixing $f\in\cF^{Mon}$) be a monotone individual fairness auditing scheme, and fix a panel of auditors $\bar{j}=(j^1,\dots,j^m)\in\cJ^m$. Then, for any pair $(x^l,x^r)\in\cX^2$, there exist $i^* = i(f,\bar{j},(x^l,x^r))\in [m+1]$ such that $\forall \pi\in \Delta\cH, \alpha\in\mathbb{R}^+$,
\[
\cS(\pi,(x^l,x^r),\alpha,f,\bar{j}) = j^{i^*}(\pi,(x^l,x^r),\alpha).
\]
\end{lemma}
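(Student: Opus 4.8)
The plan is to fix the ordered pair $(x^l,x^r)$ and collapse the entire $2\times 2$ matrix identity into a one-dimensional threshold problem. Writing $d^i := d^{j^i}(x^l,x^r)\in[0,1]$ for the (symmetric) implicit distance of auditor $j^i$ on this pair (with $d^0=0$, $d^{m+1}=1$ for the dummies), every auditor's verdict on the ordered pair depends on $\pi,\alpha$ only through the scalar \emph{gap} $g := \pi(x^l)-\pi(x^r)-\alpha$: indeed $j^i$ objects exactly when $g>d^i$. Since $f\in\cF^{Ind}$, the off-diagonal $(l,r)$ entry of the scheme is the single scalar function $\Phi(g) := \bar f\big(\mathbbm{1}[g>d^1],\dots,\mathbbm{1}[g>d^m]\big)$, while auditor $i$'s entry is $\mathbbm{1}[g>d^i]$. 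The goal thus reduces to exhibiting an index $i^*$ with $\Phi(g)=\mathbbm{1}[g>d^{i^*}]$ for every achievable $g$; note $g$ ranges over $(-\infty,1]$ since $\pi(x^l),\pi(x^r)\in[0,1]$ and $\alpha\geq0$.

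Next I would argue that $\Phi$ is a threshold function. As $g$ increases, each indicator $\mathbbm{1}[g>d^i]$ can only flip from $0$ to $1$, so the objection profile only moves upward in the order $\preccurlyeq$ of Definition~\ref{def:order}; this is exactly the ``fix the pair and decrease $\alpha$, or increase $\pi(x^l)-\pi(x^r)$'' scenario described there. Monotonicity of $\bar f$ (Definition~\ref{def:monotone}) then forces $\Phi$ to be non-decreasing in $g$, and being $\{0,1\}$-valued it is a threshold function. The structural point is that the profile $(\mathbbm{1}[g>d^i])_{i\in[m]}$ is constant between consecutive thresholds, so $\Phi$ is piecewise constant and can change value only at one of the real thresholds $d^1,\dots,d^m$. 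Consequently its transition point coincides with some $d^{i^*}$.

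It then remains to pin down $i^*$ and verify the inequality directions. If $\Phi$ is not identically $0$ on $(-\infty,1]$, let its transition point be $d^{i^*}$ for a real auditor $i^*\in[m]$ that turns on there. Because objections use a \emph{strict} inequality, at $g=d^{i^*}$ every auditor of threshold $d^{i^*}$ is still off, so the profile at $g=d^{i^*}$ equals the pre-transition profile and $\Phi(d^{i^*})=0$; combined with monotonicity this yields $\Phi(g)=\mathbbm{1}[g>d^{i^*}]$ exactly, matching $j^{i^*}$. If instead $\Phi\equiv0$ on the achievable range, then the dummy $j^{m+1}$ (which never objects for $g\le1$) matches, giving $i^*=m+1$. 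The only other possibility, $\Phi\equiv1$, forces $\bar f(0,\dots,0)=1$: this is the degenerate ``object even when no auditor objects'' scheme, already inconsistent with the diagonal entries (whose gap is $-\alpha\le0<d^i$, so every auditor, and hence the scheme under the natural normalization $\bar f(0,\dots,0)=0$, outputs $0$), and it is ruled out by that normalization, leaving $i^*\in[m+1]$ as claimed (at the opposite extreme, $j^0$, equivalently any real auditor of threshold $0$, realizes a transition at $g=0$). Finally the single-entry result lifts to the full matrix: both off-diagonal entries evaluate the \emph{same} scalar function $\Phi$, since the thresholds are symmetric ($d^i(x^l,x^r)=d^i(x^r,x^l)$), so they share one $i^*$, while the diagonal entries vanish for the scheme and for $j^{i^*}$ alike.

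The main obstacle is precisely this last bookkeeping: ensuring the transition point of $\Phi$ is realized \emph{exactly} by an auditor's threshold with the correct strict-inequality side (so that $\Phi(g)=\mathbbm{1}[g>d^{i^*}]$ holds including at $g=d^{i^*}$), and correctly mapping the two constant regimes to the dummy auditors $j^{m+1}$ and $j^0$ while keeping the single index $i^*$ valid uniformly over the achievable range $g\le1$ and across both matrix orientations. The monotonicity hypothesis is used twice and is essential: it is what makes $\Phi$ a threshold function in the first place, and it is what guarantees there is a single transition rather than an alternating pattern that no single auditor could reproduce.
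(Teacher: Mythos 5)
Your proof is correct and, at its core, takes essentially the same route as the paper's. The paper sorts the auditors on the fixed pair by $d^{i_1}(x^l,x^r)\leq\dots\leq d^{i_m}(x^l,x^r)$ and observes that the only achievable objection profiles are prefixes $(1,\dots,1,0,\dots,0)$, so that monotonicity of $\bar f$ makes the violation-inducing profiles an upward-closed chain with a pivotal auditor; your reduction to the scalar function $\Phi(g)=\bar f\big(\mathbbm{1}[g>d^1],\dots,\mathbbm{1}[g>d^m]\big)$ of the gap $g=\pi(x^l)-\pi(x^r)-\alpha$ is the same argument in one-dimensional form. One point where you are actually more careful than the paper: you verify the value of $\Phi$ \emph{exactly at} the jump point (using strictness of the objection inequality to get $\Phi(d^{i^*})=0$), so the identity $\Phi(g)=\mathbbm{1}[g>d^{i^*}]$ holds including at $g=d^{i^*}$; the paper never addresses the boundary value.

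The one place you genuinely diverge is the degenerate case $\bar f(0,\dots,0)=1$. You exclude it by a ``natural normalization'' which is \emph{not} part of Definition \ref{def:monotone}, so strictly speaking your proof establishes the lemma only for schemes satisfying that normalization. The paper instead assigns $i^*=0$ and appeals to the dummy auditor $j^0$ with $d^{j^0}\equiv 0$. But note that the paper's handling does not actually satisfy the claimed identity either: if $\bar f(0,\dots,0)=1$ then monotonicity forces the scheme to report a violation always, yet $j^0$ does not object when $\pi(x^l)=\pi(x^r)$ and $\alpha>0$; moreover $i^*=0$ lies outside the range $[m+1]$ announced in the statement. So your observation that no single auditor (real or dummy) can reproduce such a scheme --- visible already on the diagonal entries, or at $g\leq 0$ on the off-diagonal --- is correct, and it identifies a corner case that the paper's own proof glosses over. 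In other words, the lemma is true exactly in the form you prove it, under the normalization $\bar f(0,\dots,0)=0$, and your proof is sound; the discrepancy lies in the paper's statement and proof, not in your argument.
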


\begin{proof}[Proof of Lemma \ref{lem:monotone}]
Fix $f\in\cF^{Mon}$, a panel $\bar{j} = (j^1,\dots,j^m)\in\cJ^m$, and a pair $(x^l,x^r) \in \cX^2$. Consider an ordering of the panel by defining a set of indices $\{i_1,\dots,i_m\} = [m]$ such that
\[
d^{i_1}(x^l,x^r) \leq \dots \leq d^{i_m}(x^l,x^r).
\]
Denote the set of objection profiles with respect to predictions made on $(x^l,x^r)$ which result in an aggregated decision of a violation (coordinates are according to the auditor's ordering defined above) by
\[
Z = Z^{\bar{j},(x^l,x^r)} = \{z\in\{0,1\}^m : \bar{f}(z) = 1\}.
\]
\begin{remark}
Note that as the ordering of auditors (and hence coordinates) depends on the selection of $(x^l,x^r)$, even a fixed aggregation function $f$ and a fixed panel $\bar{j}$ would generate different sets $Z = Z^{\bar{j},(x^l,x^r)}$ for different selections of $(x^l,x^r)$.
\end{remark}

Next, consider the following index $i^*\in\{0\}\cup[m+1]:\quad 
i^*(f,\bar{j},(x^l,x^r)) = 
\begin{cases}
m+1 &Z = \emptyset\\
0 & (0,\dots,0)\in Z\\
\min\limits_{z\in Z} \max\limits_{q:z^{i_q} = 1} q &\text{otherwise}
\end{cases}.$

Since $f$ is monotone, and given the ordering of auditors we defined, we know that the following is the set of all possible objection profiles by $j^{i_1},\dots,j^{i_m}$ on $(x^l,x^r)$ which result in an aggregate decision of reporting a violation: 
\[
Z = \{(\overbrace{1,\dots,1}^{c\text{ times}},0,\dots,0): i^* \leq c \leq m\}.
\]
We hence know, $\forall \pi\in \Delta\cH, \alpha\in\mathbb{R}^+$:
\[
\cS(\pi,(x^l,x^r),\alpha,f,\bar{j}) = j^{i^*}(\pi,(x^l,x^r),\alpha).
\]
As desired.
\end{proof}

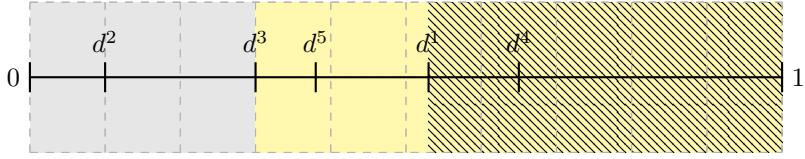
\begin{figure}[t]
\centering
\begin{tikzpicture}

\fill[fill=gray!20!white]
    (-5,2) -- (-5,0) -- (-2,0) -- (-2,2);

\fill[fill=yellow!40!white]
    (-2,2) -- (-2,0) -- (0.3,0) -- (0.3,2);

\fill[fill=yellow!40!white, postaction={pattern={north west lines}}] (0.3,2) -- (0.3,0) -- (5,0) -- (5,2);

\draw[help lines, color=gray!60!white, dashed] (-5,0) grid (5,2);
\draw[help lines, color=gray!60!white, dashed] (-5,0) grid (5,2);

\draw[thick] (-5,1) -- (5,1);
\draw[thick] (-5,0.8) -- (-5,1.2);
\draw[thick] (-5,1) -- (-5,1) node[anchor=east] {$0$};
\draw[thick] (5,0.8) -- (5,1.2);
\draw[thick] (5,1) -- (5,1) node[anchor=west] {$1$};

\draw[thick] (-4,0.8) -- (-4,1.2) node[anchor=south] {$d^2$};
\draw[thick] (0.3,0.8) -- (0.3,1.2) node[anchor=south] {$d^1$};
\draw[thick] (-2,0.8) -- (-2,1.2) node[anchor=south] {$d^3$};
\draw[thick] (-1.2,0.8) -- (-1.2,1.2) node[anchor=south] {$d^5$};
\draw[thick] (1.5,0.8) -- (1.5,1.2) node[anchor=south] {$d^4$};

\end{tikzpicture}
\caption{An illustration of the possible objection profiles of a panel of $m=5$ auditors $j^1, \dots, j^5$ with respect to the predictions made on a pair $(x,x')\in\cX^2$. In the figure, $d^1,\dots,d^5$ are shortened notation for $d^1(x,x'),\dots,d^5(x,x')$. In the example, the most strict auditor with respect to $(x,x')$ is $j^2$, whereas $j^4$ is the most lenient. Note that whenever an auditor objects to a prediction on $(x,x')$, all auditors to her left on the diagram object as well --- hence, in fact, only $m+1$ objection profiles (instead of $2^m$) are possible. The colored areas correspond to the prediction differences on $(x,x')$ that induce an aggregate violation (in this example, we set $\alpha = 0$), according to two (monotone) aggregation functions: $\bar{f}^1 = 1$ if and only if $j^3$ objects or at least an 80\% majority of objections is reached  (yellow), and $\bar{f}^2 = 1$ if and only if both $j^1$ and $j^5$ object (hatched). We can see that the ``pivot'' auditors are $j^3$ in the first case, and $j^1$ in the second.}\label{fig:panel}
\end{figure}

\subsection{On the Complexity of Auditing}

\begin{remark} Monotone auditing schemes are far more expressive than simply considering majority-based schemes \citep{BechavodRoth23}. As a simple example, consider an auditing scheme over five auditors $(j^1,\dots,j^5)$, where an objection on a pair is reported if either $j^3$ objects or in case a majority of 4/5 objections is reached. It is straightforward to see that this aggregation scheme is in fact monotone, as adding objections to any objection profile $v = (v^1,\dots,v^5)\in \{0,1\}^5$ by the auditors with respect to a pair $(x^l,x^r)$ can only result in an unchanged decision, or in changing the decision to reporting a violation. We refer the reader to Figure \ref{fig:panel} for an illustration of monotone auditing schemes.
\end{remark}

\begin{remark} \label{rem:auditing}
Note that for schemes where certain auditors have a veto right --- these members \emph{are never required to fully enunciate their objection matrix}, but rather just report a \emph{single} pair where they deem a violation to exist, or that there are no violations. In particular, employing a single auditor is a special case of a member with a veto right, making the task of auditing much simpler. For general auditing schemes, however, (non-veto having) panel members are required to report an objection matrix, as otherwise, one might run into a case of Condorcet's paradox \citep{condorcet2014} --- for example, when each auditor reports a different pair out of multiple objections, and while a pair on which an objection profile resulting in a violation is formed, it is never detected.
\end{remark}

\begin{remark}
Varying the size of the sensitivity parameter $\alpha\in[0,1]$ corresponds to more stringent constraints (for smaller values of $\alpha$), or less stringent ones (for larger values), hence offering a natural ``lever'' for the learner to explore different points on the resulting accuracy-fairness frontier.
\end{remark}
\section{Online Learning with Individual Fairness} \label{sec:online}

Here, we formally define our problem setting. We begin by defining the two types of losses we wish to minimize: misclassification loss and unfairness loss.

\begin{definition}[Misclassification loss] \label{def:error}We define the misclassification loss as, for all $\pi\in\Delta\cH$,  $\bar{x} \in \cX^k$, $\bar{y}\in\{0,1\}^k$:
\[
\text{Error}(\pi,\bar{x},\bar{y}) := \E\limits_{h\sim\pi} [\ell^{0-1}(h,\bar{x},\bar{y})].
\]
Where for all $h\in\cH$, $\ell^{0-1}(h,\bar{x},\bar{y}) := \sum_{i=1}^k \ell^{0-1}(h,(\bar{x}^i,\bar{y}^i))$, and $\forall i\in[k]:\ell^{0-1}(h,(\bar{x}^i,\bar{y}^i)) = \mathbbm{1}[h(\bar{x}^i)\neq \bar{y}^i]$.\footnote{For simplicity, we define our misclassification loss as the expectation (over $h\sim\pi$) of the 0-1 loss. However, one can consider different base loss functions as well.}
\end{definition}

In particular, the misclassification loss is linear in $\pi$ (directly from the definition of expectation). We define the unfairness loss, to reflect the existence of one or more fairness violations according to an auditing scheme.

\begin{definition}[Unfairness loss]\label{def:unfair-loss} We define the unfairness loss as, for all $\pi\in\Delta\cH$, $\bar{x} \in \cX^k$, $\cS\in\bar{\cS}$, $\alpha \in \mathbb{R}^+$,
\[
\text{Unfair}(\pi,\bar{x},\cS,\alpha) := \begin{cases}
1 &\vec{\cS}(\pi,\bar{x},\alpha) = (\bar{x}^l,\bar{x}^r)
\\
0 &\vec{\cS}(\pi,\bar{x},\alpha) = \text{Null}
\end{cases}.
\]
\end{definition}
There is, however, an issue with working directly with the unfairness loss: as we will see in Section \ref{sec:alg}, we will only have access to realizations $h\sim\pi$, rather than the actual probabilities. Taking the expectation in this case will not be helpful either, as it is easy to construct cases where $\text{Unfair}(\pi,\bar{x},\cS,\alpha) = 0$, yet $\E_{h\sim\pi}[\text{Unfair}(\pi,\bar{x},\cS,\alpha)] = 1$ (we refer the reader to Lemma 4.11 in \cite{BechavodRoth23}). We will hence rely on resampling $h\sim\pi$ multiple times to form $\tilde{\pi}$, an empirical approximation of $\pi$, and use it to elicit fairness violations from the auditing scheme. We hence next introduce an unfairness proxy loss:

\begin{definition}[Unfairness proxy loss]\label{def:unfair-proxy} We define the unfairness proxy loss as, for all $\pi,\tilde{\pi}\in\Delta\cH$, $\bar{x} \in \cX^k$, $\cS\in \bar{\cS}$, $\alpha \in \mathbb{R}^+$, $\beta\in\mathbb{R}$, 
\[
\overline{\text{Unfair}}(\pi,\tilde{\pi},\bar{x},\cS,\alpha,\beta):=\\
\begin{cases}
\left[\pi(\bar{x}^l)-\pi(\bar{x}^r)\right] - \left[\tilde{\pi}(\bar{x}^l)-\tilde{\pi}(\bar{x}^r)\right] + \beta & \quad \vec{\cS}(\tilde{\pi},\bar{x},\alpha) = (\bar{x}^l,\bar{x}^r)\\
0 & \quad \vec{\cS}(\tilde{\pi},\bar{x},\alpha) = Null
\end{cases}.
\]
\end{definition}
Note, once again, that the unfairness proxy loss is \emph{linear} with respect to $\pi$ (by definition of $\Delta\cH$). In the following lemma we argue, for monotone auditing schemes, that if $\tilde{\pi}$ is in fact a good enough approximation of $\pi$, the unfairness proxy loss provides a meaningful upper bound on the unfairness loss.

\begin{lemma} \label{lem:unfairness}
Let $\pi,\tilde{\pi}\in\Delta\cH$, $\bar{x}\in\cX^k$, $\cS\in\bar{\cS}$, $\alpha\in(0,1]$, $\epsilon'\in(0,\alpha]$. If $\cS$ is monotone, and $\forall i\in[k]: \left\vert\pi(\bar{x}^i) - \tilde{\pi}(\bar{x}^i)\right\vert \leq \frac{\epsilon'}{4}$, then
$\text{Unfair}(\pi,\bar{x},\cS,\alpha) 
\leq
\frac{2}{\epsilon'}\overline{\text{Unfair}}(\pi,\tilde{\pi},\bar{x},\cS,\alpha-\epsilon',\epsilon').$
\end{lemma}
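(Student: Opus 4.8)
The plan is to split on the value of the Boolean loss $\text{Unfair}(\pi,\bar{x},\cS,\alpha)\in\{0,1\}$ and, in each case, lower bound the proxy loss on the right-hand side. A single preliminary observation does most of the work. By the triangle inequality and the hypothesis $|\pi(\bar{x}^i)-\tilde{\pi}(\bar{x}^i)|\le \epsilon'/4$ for all $i$, every pair $(\bar{x}^{l'},\bar{x}^{r'})$ obeys
\[
\left|\left[\pi(\bar{x}^{l'})-\pi(\bar{x}^{r'})\right]-\left[\tilde{\pi}(\bar{x}^{l'})-\tilde{\pi}(\bar{x}^{r'})\right]\right|\le \tfrac{\epsilon'}{4}+\tfrac{\epsilon'}{4}=\tfrac{\epsilon'}{2}.
\]
Hence, whenever the proxy loss is evaluated in its violation branch with $\beta=\epsilon'$, its value is at least $-\epsilon'/2+\epsilon'=\epsilon'/2$, and in its Null branch it is exactly $0$; in particular the proxy loss is always nonnegative. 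This at once settles the case $\text{Unfair}(\pi,\bar{x},\cS,\alpha)=0$, where the left-hand side is $0$ and $\tfrac{2}{\epsilon'}\overline{\text{Unfair}}(\cdot)\ge 0$.

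For the case $\text{Unfair}(\pi,\bar{x},\cS,\alpha)=1$ I must show the right-hand side is at least $1$, i.e. the proxy loss is at least $\epsilon'/2$; by the observation it suffices to show the proxy loss lands in its violation branch, that is $\vec{\cS}(\tilde{\pi},\bar{x},\alpha-\epsilon')\ne\text{Null}$. This is where monotonicity enters. Since $\text{Unfair}(\pi,\bar{x},\cS,\alpha)=1$, there is a pair $(\bar{x}^l,\bar{x}^r)$ with $[\cS(\pi,\bar{x},\alpha,f,\bar{j})]_{l,r}=1$. Because $f\in\cF^{Mon}\subseteq\cF^{Ind}$, the $(l,r)$ entry of the aggregate matrix depends only on the $(l,r)$ entries of the auditors' matrices, which depend only on the pair $(\bar{x}^l,\bar{x}^r)$; I may therefore apply Lemma \ref{lem:monotone} to this pair to obtain a pivot index $i^*=i(f,\bar{j},(\bar{x}^l,\bar{x}^r))$, independent of policy and sensitivity, with
\[
\cS(\pi,(\bar{x}^l,\bar{x}^r),\alpha,f,\bar{j})=j^{i^*}(\pi,(\bar{x}^l,\bar{x}^r),\alpha)=1,
\]
which by the definition of an auditor means $\pi(\bar{x}^l)-\pi(\bar{x}^r)>d^{i^*}(\bar{x}^l,\bar{x}^r)+\alpha$.

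The remaining step transfers this violation to $(\tilde{\pi},\alpha-\epsilon')$ through the \emph{same} pivot auditor. Using $\tilde{\pi}(\bar{x}^l)\ge\pi(\bar{x}^l)-\epsilon'/4$ and $\tilde{\pi}(\bar{x}^r)\le\pi(\bar{x}^r)+\epsilon'/4$,
\[
\tilde{\pi}(\bar{x}^l)-\tilde{\pi}(\bar{x}^r)\ge\left[\pi(\bar{x}^l)-\pi(\bar{x}^r)\right]-\tfrac{\epsilon'}{2}>d^{i^*}(\bar{x}^l,\bar{x}^r)+\alpha-\tfrac{\epsilon'}{2}>d^{i^*}(\bar{x}^l,\bar{x}^r)+(\alpha-\epsilon'),
\]
so $j^{i^*}(\tilde{\pi},(\bar{x}^l,\bar{x}^r),\alpha-\epsilon')=1$. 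Since $i^*$ is the pivot for this pair at every policy and sensitivity (Lemma \ref{lem:monotone}), this forces $[\cS(\tilde{\pi},\bar{x},\alpha-\epsilon',f,\bar{j})]_{l,r}=1$, hence $\vec{\cS}(\tilde{\pi},\bar{x},\alpha-\epsilon')\ne\text{Null}$, as required.

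I expect the main obstacle to be a bookkeeping subtlety rather than a conceptual one: $\vec{\cS}$ returns an \emph{arbitrary} violating pair, so the pair at which the proxy loss is actually evaluated need not coincide with the pair $(\bar{x}^l,\bar{x}^r)$ on which I established the violation. This is exactly why the preliminary observation proves the uniform lower bound $\epsilon'/2$ for \emph{every} pair — once the scheme is known to fire on $\tilde{\pi}$ at sensitivity $\alpha-\epsilon'$ (non-Null), the value at whatever pair it happens to report is still at least $\epsilon'/2$, yielding $\tfrac{2}{\epsilon'}\overline{\text{Unfair}}(\pi,\tilde{\pi},\bar{x},\cS,\alpha-\epsilon',\epsilon')\ge 1$ and completing the inequality.
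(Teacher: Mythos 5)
Your proof is correct and follows essentially the same route as the paper's: the paper splits on whether $\vec{\cS}(\tilde{\pi},\bar{x},\alpha-\epsilon')$ is Null (using monotonicity to argue Null $\implies$ $\vec{\cS}(\pi,\bar{x},\alpha)$ is Null, and the triangle-inequality bound $\overline{\text{Unfair}} \geq \epsilon'/2$ in the non-Null branch), which is exactly the contrapositive of your split on the value of $\text{Unfair}(\pi,\bar{x},\cS,\alpha)$, with identical ingredients. The only difference is expository: you spell out the monotonicity transfer via the pivot auditor of Lemma \ref{lem:monotone} and the arbitrary-reported-pair bookkeeping, both of which the paper compresses into a single sentence.
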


\begin{proof} [Proof of Lemma \ref{lem:unfairness}]
Assume the condition in the statement of the lemma holds. Using the condition in conjunction with the triangle inequality, and the fact that $\cS$ is monotone, we know that:
\[
\vec{\cS}(\tilde{\pi},\bar{x},\alpha-\epsilon') = \text{Null} \implies \vec{\cS}(\pi,\bar{x},\alpha) = \text{Null}.
\]
In such case, 
\[
\text{Unfair}(\pi,\bar{x},\cS,\alpha) 
= \overline{\text{Unfair}}(\pi,\tilde{\pi},\bar{x},\cS,\alpha-\epsilon',\epsilon') = 0,
\]
And hence
\[
\text{Unfair}(\pi,\bar{x},\cS,\alpha) 
\leq
\frac{2}{\epsilon'}\overline{\text{Unfair}}(\pi,\tilde{\pi},\bar{x},\cS,\alpha-\epsilon',\epsilon').
\]

Otherwise, $\vec{\cS}(\tilde{\pi},\bar{x},\alpha-\epsilon') = (\bar{x}^l,\bar{x}^r)$, and we know
\begin{align*}
\text{Unfair}(\pi,\bar{x},\cS,\alpha)
&\leq 
1\\
&= 
\frac{2}{\epsilon'} \left[\frac{-\epsilon'}{2} + \epsilon'\right]\\
&\leq
\frac{2}{\epsilon'}\Bigg[\left[\pi(\bar{x}^l) - \pi(\bar{x}^r)\right] - \left[\tilde{\pi}(\bar{x}^l) - \tilde{\pi}(\bar{x}^r)\right]+\epsilon'\Bigg]\\
&=
\frac{2}{\epsilon'}\overline{\text{Unfair}}(\pi,\tilde{\pi},\bar{x},\cS,\alpha-\epsilon',\epsilon').
\end{align*}
Where the first inequality stems from Definition \ref{def:unfair-loss}, and the second inequality follows from the condition in the statement of this lemma, along with the triangle inequality. The claim follows.
\end{proof}

\subsection{Online Learning Setting}

Our setting is formally described in Algorithm \ref{alg:setting}, where we denote a Learner by $\mathbf{L}$, and an Adversary by $\mathbf{A}$.\footnote{In the setting described in Algorithm \ref{alg:full}, we assume that the number of incoming individuals on every round is constant --- $k$. It is however possible to consider a more general scenario, where this number changes between rounds. In this more general case, our bounds will simply scale with $max_{t\in[T]} k_t$ instead of $k$.}

\begin{algorithm}[H]
\caption{Online Learning with Individual Fairness}
\label{alg:setting}
\begin{algorithmic}
\STATE {\bfseries Input:} Number of rounds $T$, hypothesis class $\cH$, violation size $\alpha \in (0,1]$
\FOR{$t = 1, \ldots, T$}\vspace{0.3em}
\STATE $\mathbf{L}$ deploys $\pi^t\in\Delta\cH$;\vspace{0.3em}
\STATE $\mathbf{A}$ selects $(\bar{x}^t,\bar{y}^t) \in\cX^k\times\cY^k$
\vspace{0.3em};
\STATE $\mathbf{A}$ selects auditing scheme $\cS^t$ (fixing $\bar{j}^t,f^t$);\vspace{0.3em}
\STATE $\mathbf{L}$ suffers misclassification loss $\text{Error}(\pi^t,\bar{x}^t, \bar{y}^t)$;\vspace{0.3em}
\STATE $\mathbf{L}$ suffers unfairness loss $\text{Unfair}(\pi^t,\bar{x}^t,\cS^t,\alpha)$;\vspace{0.3em}
\STATE $\mathbf{L}$ observes $(\bar{x}^t,\bar{y}^t),\rho^t = \vec{\cS}^t(\pi^t,\bar{x}^t,\alpha,f^t,\bar{j}^t)$;\vspace{0.3em}
\ENDFOR
\end{algorithmic}
\end{algorithm}

To build intuition, consider the following motivating example of loan approvals: a government-based financial institution wishes to predict incoming loan applications in a manner that is simultaneously accurate (highly predictive of future repayment), and fair (similar applicants receive similar assessments). To obtain fairness feedback, the institution periodically hires panels of auditors (financial experts, ethicists, etc.) who report assessments they deem unfair. 

In the notation of Algorithm \ref{alg:setting}, $\pi^t$ is a lending policy deployed at time $t$. For each applicant $i$ of the $k$ arriving loan applicants at round $t$, $\bar{x}^{t,i}\in\cX$ are relevant features (income, repayment history, debt, etc.), and  $\bar{y}^{t,i}\in \{0,1\}$ indicates whether the applicant is to repay the loan if approved. The auditing scheme $\cS^t$ aggregates the reports of a panel of auditors $\bar{j}^t = (j^{t,1},\dots,j^{t,m_t})$ with respect to the predictions made by $\pi^t$ on applicants $\bar{x}^t=(\bar{x}^{t,1},\dots,\bar{x}^{t,k})$ according to aggregation function $f^t$, and reports back in case a violation was found. Finally, the deployed lending policy is measured by whether it predicted repayment accurately, and whether it treated similar applicants (in the eyes of the panel) similarly.

In what follows, we adopt the following notation, $\forall t\in[T]:\quad\text{Error}^t(\pi) := \text{Error}(\pi,\bar{x}^t,\bar{y}^t),\quad\text{Unfair}^t_\alpha(\pi) := \text{Unfair}(\pi,\bar{x}^t,\cS^t,\alpha),\quad\overline{\text{Unfair}}_{\tilde{\pi}^t,\alpha,\beta}^t(\pi) := \overline{\text{Unfair}}(\pi,\tilde{\pi}^t,\bar{x}^t,\cS^t,\alpha,\beta)$.

Next, we formally define our learning objectives. Ideally, a learner could aim to refrain completely from having any fairness violations, by restricting, on every round, the set of active policies to only ones that obey the active fairness constraints. There are $k^2$ such constraints every round --- corresponding to all pairs of individuals in $\bar{x}^t$. However, these constraints are \emph{implicit} --- they are decided by the (internal) preferences of the auditors in $\bar{j}^t$, along with the aggregation function $f^t$. Making these constraints \emph{explicit} would require strictly stronger access to the auditors than assumed in our framework, querying for \emph{exact distances} between all pairs in $\bar{x}^t$. In our framework, however, auditors are only required to report fairness \emph{violations}, and are not even required to specify the size of those violations.\footnote{Additionally, as also stated in Remark \ref{rem:auditing}, in many notable cases, auditors will not even be required the enunciate all of their objections, but rather a single one.}

We will hence adopt a slightly more relaxed objective, where we allow the learner to violate the constraints, but only for a \emph{sub-linear} number of times. This is the approach also taken, in the context of learning with individual fairness, by \citet{GillenJKR18,BechavodJW20, BechavodRoth23}, and more generally in the literature on online learning with long-term constraints (e.g. \citet{mahdavi12a, jenatton16, Sun17a,Castiglioni22}). We next define the class of policies we wish to compete with --- policies that refrain from violations of slightly smaller sensitivity of $\alpha-\epsilon$, for $\epsilon\in(0,\alpha]$.\footnote{We adopt a slightly relaxed baseline in terms of violation sensitivity, as the adversary can always report violations of magnitude \emph{arbitrarily} close to $\alpha$.}

\begin{definition} \label{def:comparator} [Fair-In-Hindsight Policies]
Denote the realized sequence of individuals, labels, auditors, and aggregation functions by the adversary until round $t\in[T]$ by 
\[
\Psi^t := \left((\bar{x}^1,\bar{y}^1,\bar{j}^1,f^1),\dots,(\bar{x}^t,\bar{y}^t,\bar{j}^t,f^t)\right).
\]
We define the comparator class of $(\alpha-\epsilon)$-fair policies as\footnote{Interestingly, since in our setting the learner does not receive full information regarding the constraints, but rather very limited, ``bandit''-like information on violations made by policies that were actually deployed, it is possible that the learner will not know (even in hindsight) which policies are included in the set of fair-in-hindsight policies. Nevertheless, as we will see, it will be possible to provide strong guarantees when competing against it.\label{fn:unobservable}}\textsuperscript{,}\footnote{As we rely on the sensitivity of human auditors in reporting violations, it will be reasonable to think about $\alpha,\epsilon$ as small constants.}
\[
\Delta\cH_{\alpha-\epsilon}^{fair}(\Psi^t) := \{\pi\in\Delta\cH : \forall t\in[T], \text{ Unfair}_{\alpha-\epsilon}^t(\pi) = 0\}.
\]
We also denote
$
\pi^*\in\argmin_{\pi\in\Delta\cH_{\alpha-\epsilon}^{fair}(\Psi^t)}\sum_{t=1}^T\text{Error}^t(\pi)$. In the case where $\epsilon = \alpha$, we elide the subscript and simply denote $\Delta\cH^{fair}(\Psi^t) := \Delta\cH_0^{fair}(\Psi^t)$.
\end{definition}

We refer the reader to Figure \ref{fig:Convex}, for an illustration of $\Delta\cH^{fair}(\Psi^t)$ from Definition \ref{def:comparator}.

Finally, we formally define our learning objective. First, we formally define the regret of an online algorithm.

\begin{definition}[Regret] \label{def:regret} In the setting of Algorithm \ref{alg:full}, we define the (external) regret of an online algorithm $\cA$ against a comparator class $U \subseteq \Delta\cH$ as:
\[
\text{Regret}_T(U) := \sum_{t=1}^T \text{Error}(\pi^t,\bar{x}^t, \bar{y}^t) - \min_{\pi\in U} \sum_{t=1}^T \text{Error}(\pi,\bar{x}^t, \bar{y}^t).
\]
For a randomized algorithm, we will consider the expected regret.
\end{definition}

Equipped with Definition \ref{def:regret}, we define our learning objective:

\noindent\fbox{%
    \parbox{\dimexpr\linewidth-2\fboxsep-2\fboxrule}{%
\textbf{Learning objective}: In the setting of Algorithm \ref{alg:setting}, obtain:
\begin{enumerate}
    \item \textbf{Simultaneous no-regret}:
\begin{enumerate}
    \item \textbf{ Accuracy}: $\text{Regret}_T(\Delta\cH^{fair}(\Psi^t)) = o(T)$.
    \item \textbf{ Fairness}: $\sum_{t=1}^T\text{Unfair}_{\alpha}^t(\pi^t) = o(T)$.
\end{enumerate}
    \item \textbf{Oracle-efficiency}: Polynomial runtime, given access to an (offline) optimization oracle.\footnotemark
\end{enumerate}
    }%
}
\footnotetext{The concept of oracle-efficiency aims to show that the online problem is not computationally harder than an offline version of the problem. Hence, when the learner has access to an optimization oracle for the offline problem (in our case, a batch ERM oracle for $\cH$), we will be interested in algorithms that run in polynomial time, where each call to this oracle is counted as $\mathcal{O}(1)$. Algorithms such as Multiplicative Weights \citep{LittlestoneW94,Vovk90,cesa,FreundS97}, on the other hand, have exponential runtime and space complexity dependence on $\log\vert\cH\vert$, as they \emph{explicitly} maintain and update on every round a vector of probabilities over $\cH$.
}

\begin{remark}
It is important to note that in our framework, the learner is required to obey a \emph{different} set of constraints on every round, and that \emph{constraints do not hold across different rounds}. We refer the reader to Section 5.1 in \cite{GuptaK19}, who discuss the stronger baseline discussed above, showing that regret that grows linearly in the number of rounds in unavoidable in this case.  
\end{remark}

\begin{figure}[t]
\centering
\begin{tikzpicture}
\fill[fill=blue!30!white]
    (1,3.75) -- (-1.5,3.5) -- (-1.75,1.5)  -- (0,1) -- (2,2) -- (1,3.75);

\draw[gray, thick] (1,3.75) -- (-1.5,3.5);
\draw[gray, thick] (-1.5,3.5) -- (-1.75,1.5);
\draw[gray, thick] (-1.75,1.5) -- (0,1);
\draw[gray, thick] (0,1) -- (2,2);
\draw[gray, thick] (2,2) -- (1,3.75);

\filldraw[black] (0,2.7) circle (0pt) node[anchor=north] {$\Delta\cH^{fair}(\Psi^t)$};
\draw[thick, dashed] (-2,0) -- (5,3.5);
\filldraw[black] (4,1) circle (2pt) node[anchor=west] {$\pi^t$};
\draw[->, >=stealth, thick, dashed] (4,1) -- (3.2,2.6);

\draw [thick,decoration={brace}, decorate] (3.9,0.95) -- (3.1,2.55) ;
\filldraw[black] (3.1,1.7) circle (0pt) node[anchor=north] {$>\alpha$};

\end{tikzpicture}
\caption{A low-dimensional geometric illustration of the constraints and feedback structure in our model. The polygon represents $\Delta\cH^{fair}(\Psi^t)$. Each of the $k^2T$ halfspaces that define the polygon corresponds to an individual fairness constraint binding over a pair of individuals. At round $t$, only $k^2$ of the $k^2T$ constraints that define the polygon are active. Note that this polygon is \emph{implicit} --- the learner does not know it at any point. Rather, under our feedback model, we are only guaranteed that if, at any round $t$, one or more of the $k^2$ active constraints at that round is violated by at least $\alpha$, the \emph{direction} (see Definition \ref{def:violation}) to one of the halspaces representing these constraints, but \emph{not} the actual distance from it, is revealed to the learner.} \label{fig:Convex}
\end{figure}
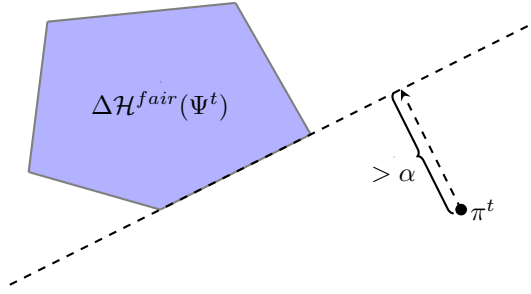

\subsection{On Achieving Simultaneous No-Regret Guarantees}\label{sec:sim}

Obtaining each of the accuracy, fairness objectives \emph{in isolation} is a relatively easy task --- for accuracy, one can run an oracle-efficient no regret algorithm such as Context-FTPL \citep{SyrgkanisKS16} using only the misclassification loss. For fairness, one can simply predict using any constant predictor, which would ensure fairness violations never occur, regardless of the auditing scheme. However, when attempting to obtain both objectives \emph{simultaneously}, the task becomes much more complicated. In particular, one cannot simply combine, in online fashion, the per-round outputs of said algorithms when run in isolation. One immediate reason is that the feedback of the auditing process only pertains to the policies that have actually been deployed.\footnote{For example, suppose a policy $\pi^t$ was reported by $\cS^t$ to induce a violation on individuals $(\bar{x}^{t,l},\bar{x}^{t,r})$, when predicting, say, $\pi^t(\bar{x}^{t,l}) = 0.8$, $\pi^t(\bar{x}^{t,r}) = 0.4$. The learner would not know if $\cS^t$ would have still reported a violation on $(\bar{x}^{t,l},\bar{x}^{t,r})$ had he deployed a \emph{different} policy, $\bar{\pi}^t$, for which $0 <  \tilde{\pi}^t(\bar{x}^{t,l}) - \tilde{\pi}^t(\bar{x}^{t,r}) < 0.4$.} Another reason is that such process may result in highly suboptimal accuracy-fairness tradeoffs for $k>2$.

Another (naive) approach is to define a joint loss function of misclassification and (linearized, proxy) unfairness $L^t(\pi) = \text{Error}^t(\pi) + \text{UnfairProx}^t(\pi)$, and run a no-regret algorithm with respect to the sequence of losses $L^1,\dots,L^T$, in hopes of bounding each of the objectives \emph{individually}. Unfortunately, this may fail. The reason is that regret may actually be \emph{negative}\footnote{This is the case, since the algorithm has the liberty of deploying a different policy $\pi^t\in\cH$ on every round, while competing with a \emph{fixed} policy $\pi^*\in\Delta\cH$.}: 
$\sum_{t=1}^T \text{Error}^t(\pi^t) - \sum_{t=1}^T \text{Error}^t(\pi^*) < 0$. Hence, even if $\sum_{t=1}^T L^t(\pi^t) - \sum_{t=1}^T L^t(\pi^*) = o(T)$, the algorithm may have still violated fairness on \emph{every round}.\footnote{In general, having negative regret is highly desirable --- it means that the algorithm performed even better than the baseline. However, in our particular case, it may actually do us a disservice --- 
it can be used to ``compensate'' for fairness violations, potentially resulting in ignoring the fairness objective altogether.}

\citet{BechavodJW20} suggested a reductions approach to the problem, dynamically encoding fairness violations as ``fake'' datapoints in the incoming stream, ultimately reducing the problem to a standard (unconstrained) classification problem. They then suggested ``inflating'' the number of these fake datapoints, so as to, on one hand, penalize unfairness more severely, and on the other hand, not to increase the artificial dimension of the problem too sharply (since the resulting bounds deteriorate as $k$ grows larger). They then give an oracle-efficient algorithm that guaranteeing a bound of $O(T^\frac{7}{9})$ for each of regret and number of fairness violations. In order to circumvent the fact that in their algorithm, the learner only has sampling access to the deployed policy $\pi^t$, they suggest approximating this policy using $T$ calls to an offline optimization oracle on every round. The technique of \citet{BechavodJW20} can be generalized to considering loss functions of the form $L^t(\pi) = \text{Error}^t(\pi) +\lambda\cdot \text{UnfairProx}^t(\pi)$, where $\lambda>0$ is carefully fixed, in line with the discussion above. We next show how both the convergence rates and and oracle complexity given in \citet{BechavodJW20} can be improved.

\subsection{Achieving Faster Rates with Dynamic Lagrangian Loss}
A central key to obtaining faster convergence rates in our approach follows from the following observation --- when the trade-off parameter $\lambda$ is fixed \emph{ahead of time}, it does not take into account the \emph{revealed} level of stringency of the auditing schemes. Revealing this level of stringency can only be done \emph{dynamically}, as uncovering the fairness constraints relies on real-time feedback from dynamically selected auditors and aggregation functions. In particular, setting $\lambda$ too low would risk potentially ignoring the fairness constraints (as illustrated in the example in the second paragraph of Section \ref{sec:sim}, where $\lambda=1$). Setting $\lambda$ too high, however, would lead to worse regret rates, as these closely depend on the scale of the Lagrangian loss (we further elaborate on this point in Section \ref{sec:alg}).

The approach we take is to dynamically combine error and unfairness losses at changing trade-off rates. Inspired the literature on learning with long-term constraints (primarily \citet{mahdavi12a,Sun17a}), and more generally, \citet{AgarwalBD0W18,FreundS97}, we take the perspective of a saddle-point problem for our learning objective --- where the primal player (who sets $\pi$) attempts to minimize the Lagrangian loss, while the dual player (who sets $\lambda$) attempts to maximize it. Following \citet{mahdavi12a,Sun17a}, we consider the composite loss function defined next, where we additionally incorporate a regularization term for $\lambda$.\footnote{The regularization term ensures that one cannot increase the multipliers indefinitely at no cost. It is possible to alternatively project the action space of the dual player at any time into a bounded interval. However, taking the regularization approach is convenient as it will allow us to perform unconstrained optimization for the dual player.}

\begin{definition}[Regularized Lagrangian loss]\label{def:Lagrangian}
Let $\alpha\in(0,1]$, $\beta\in\mathbb{R}$, $\mu\in\mathbb{R}^+$, and fix any $\tilde{\pi}\in \Delta\cH$. We define the $(\alpha,\beta,\mu,\tilde{\pi})$-Lagrangian loss at round $t\in[T]$ as, for all $\pi\in\Delta\cH$, $\lambda \in \mathbb{R}^+$,
\[
L^t(\pi,\lambda) := \frac{1}{k}\cdot \text{Error}^t(\pi) + \lambda\cdot\overline{\text{Unfair}}_{\tilde{\pi},\alpha,\beta}^t(\pi) - \mu \cdot \frac{\lambda^2}{2}.
\]
\end{definition}

In particular, $k$ will be used to normalize and bound the actual number of errors (see Definition \ref{def:error}), and $\mu$ will represent the learning rate of Online Gradient Descent \citep{Zinkevich03}, which we will use to update the $\lambda$ parameter (formally discussed in Section \ref{sec:alg}).
Importantly, the Lagrangian loss is \emph{linear} in $\pi\in\Delta\cH$. This will be critical in competing against the best fair policy in $\Delta\cH$, rather than against the much weaker class $\cH$.

We will additionally define a non-regularized version of the Lagrangian loss, as it will allow us to upper bound the scale of the losses in the reduction part involving the primal player more easily.

\begin{definition}[Lagrangian loss]\label{def:Lagrangian2}
Let $\alpha\in(0,1]$, $\beta\in\mathbb{R}$, and fix any $\tilde{\pi}\in \Delta\cH$. We define the $(\alpha,\beta,\tilde{\pi})$-Lagrangian loss at round $t\in[T]$ as, for all $\pi\in\Delta\cH$, $\lambda \in \mathbb{R}^+$,
\[
\bar{L}^t(\pi,\lambda) := \frac{1}{k}\cdot \text{Error}^t(\pi) + \lambda\cdot\overline{\text{Unfair}}_{\tilde{\pi},\alpha,\beta}^t(\pi).
\]
\end{definition}

\begin{remark}
Note that from the perspective of the primal player, shifting from the regularized (Definition \ref{def:Lagrangian}) to the non-regularized (Definition \ref{def:Lagrangian2}) version of the Lagrangian loss has no impact on where any optimization problem over $\Delta\cH$ using these losses obtains its extremal values, as the regularization term is not affected by the selection of $\pi$.
\end{remark}

\subsection{Reduction to Context-FTPL + Online Gradient Descent}\label{sec:alg}

Equipped with the Lagrangian loss function, we remember that another central part of our learning objective is to provide an algorithm that is \emph{oracle-efficient}. Our approach will be the following: we will run two algorithms simultaneously --- one for updating the policy $\pi^t$ according to the sequence of loss functions $\bar{L}^1,\dots,\bar{L}^t$, and the other for updating the trade-off parameter $\lambda^t$, and generating the loss functions $L^{t+1},\bar{L}^{t+1}$ for the next round. To update $\pi^t$, we will use Context-FTPL \citep{SyrgkanisKS16}. To update $\lambda^t$, we will use Online Gradient Descent \citep{Zinkevich03}.

One particular difficulty is due to the fact that Context-FTPL does not maintain $\pi^t$ \emph{explicitly}, but rather relies on access to an (offline) optimization oracle to \emph{sample}, on each round, a single classifier $h^t\sim\pi^t$ from its \emph{implicit} policy $\pi^t$.\footnote{Follow-The-Perturbed-Leader (FTPL)-style algorithms rely on access to an offline optimization (in our case, a batch ERM) oracle, which is invoked every round on the set of samples observed until that point, augmented by a collection of generated ``fake'' noisy samples. The noise distribution in this process implicitly defines, in turn, a distribution over the experts returned by the oracle. Hence calling the optimization oracle can equivalently be viewed as sampling an expert from this distribution.} In our setting, however, access to the \emph{exact} $\pi^t$ is critical, as it is used to query the auditors for fairness violations, and form the sequence of losses $L^1,\dots,L^T$ (and $\bar{L}^1,\dots,\bar{L}^T$). To circumvent this, our approach will be to distinguish between two tasks: eliciting the fairness constraints, and evaluating the error and unfairness losses. Ideally, one would like to perform both tasks using the same policy --- the deployed policy $\pi^t$. Since, however, in our algorithm the learner will only have access to classifiers sampled from $\pi^t$, we will perform each task using a \emph{different} policy. Namely, we will first form an accurate enough approximation $\tilde{\pi}^t$ of $\pi^t$, and use it to elicit the objections of the auditors. We will then use this feedback to form the Lagrangian loss $\bar{L}^t$ (as in Definition \ref{def:Lagrangian2}) and regularized Lagrangian loss $L^t$ (Definition \ref{def:Lagrangian}). We will feed $\bar{L}^t$ to Context-FTPL, and prove accuracy, fairness guarantees for the true (implicit) policy $\pi^t$ deployed by it. Finally, $L^t$ will be used by Online Gradient Descent to update $\lambda^{t+1}$ for the next round.

Note, however, that in our framework, the learner only observes very weak feedback regarding the constraints, as she does not have access to the full constraints even \emph{after} prediction was made. Moreover, the learner doesn't even observe bandit feedback, as the value of the constraint functions for the deployed predictor (the size of the reported violation) is not available to her. Rather, she only has access to an indicator function signaling whether one of the constraints was violated by more than a certain sensitivity threshold. In light of this discussion, it is not a-priori clear how constraint feedback for an approximate policy can apply to constraint feedback for the actual policy. To circumvent this difficulty, we will make use of the specific structure of individual fairness constraints, and suggest querying the auditors using the approximate policy $\tilde{\pi}^t$ for slightly more sensitive fairness violations, of size $\alpha-\frac{\epsilon}{2}$. We will then argue that since it is sufficient for the learner to generate an approximation of $\pi^t$ that is only accurate on $\bar{x}^t$ (rather than on the entire space $\cX$), making $\tilde{\mathcal{O}}(\epsilon^{-2})$ calls to Context-FTPL's optimization oracle will suffice to generate this approximation. This approach will allow us to upper bound a \emph{counterfactual} quantity --- the number of fairness violations that would have been reported had we used the implicit policy $\pi^t$ to query the auditors. An illustration of the above argument is given in Figure \ref{fig:elicitation}.

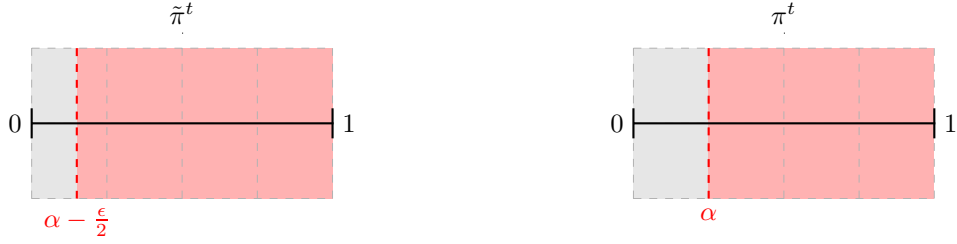
\begin{figure}[t]
\centering
\begin{tikzpicture}

\fill[fill=gray!20!white]
    (-5.4,2) -- (-5.4,0) -- (-6,0) -- (-6,2);

\fill[fill=red!30!white]
    (-5.4,2) -- (-5.4,0) -- (-2,0) -- (-2,2);
    
\fill[fill=gray!20!white]
    (2,2) -- (2,0) -- (6,0) -- (6,2);

\fill[fill=red!30!white]
    (3,2) -- (3,0) -- (6,0) -- (6,2);

\draw[help lines, color=gray!60, dashed] (-6,0) grid (-2,2);
\draw[help lines, color=gray!60, dashed] (2,0) grid (6,2);

\draw[thick] (-6,1) -- (-2,1);
\draw[thick] (-6,0.8) -- (-6,1.2);
\draw[thick] (-6,1) -- (-6,1) node[anchor=east] {$0$};
\draw[thick] (-2,0.8) -- (-2,1.2);
\draw[thick] (-2,1) -- (-2,1) node[anchor=west] {$1$};

\draw[dashed, thick, color=red] (-5.4,2) -- (-5.4,0) node[anchor=north] {$\alpha-\frac{\epsilon}{2}$};

\draw[] (-4,2.2) -- (-4,2.2) node[anchor=south] {$\tilde{\pi}^t$};

\draw[thick] (2,1) -- (6,1);
\draw[thick] (2,0.8) -- (2,1.2);
\draw[thick] (2,1) -- (2,1) node[anchor=east] {$0$};
\draw[thick] (6,0.8) -- (6,1.2);
\draw[thick] (6,1) -- (6,1) node[anchor=west] {$1$};

\draw[dashed, thick, color=red] (3,2) -- (3,0) node[anchor=north] {$\alpha$};

\draw[] (4,2.2) -- (4,2.2) node[anchor=south] {$\pi^t$};

\end{tikzpicture}
\caption{An illustration of constraint elicitation (left) and evaluation (right). Since at round $t$, with high probability, $\forall i\in[k]: \left\vert\pi^t(\bar{x}^{t,i}) - \tilde{\pi}^t(\bar{x}^{t,i})\right\vert \leq \frac{\epsilon}{8}$, we know that in order for $\pi^t$ to have an $\alpha$-violation on a pair of individuals from $\bar{x}^t$, $\tilde{\pi}^t$ must have a $(\alpha-\frac{\epsilon}{2})$-violation on that pair as well. Note that the right side of the diagram is \emph{counterfactual} --- the learner has no access to $\pi^t$ and cannot query the panel using it. However, as we see in the argument above, querying $\tilde{\pi}^t$ for $(\alpha-\frac{\epsilon}{2})$-violations is sufficient for the task of upper bounding the number of $\alpha$-violations by $\pi^t$.} \label{fig:elicitation}
\end{figure}

Finally, in order to run Context-FTPL \citep{SyrgkanisKS16}, we assume access to a small separating set for $\cH$, of size $s$, and access to an (offline) optimization oracle. The optimization oracle assumption is equivalent to access to a batch ERM oracle for $\cH$. We next describe the small separating set assumption. Our construction is then formally described in Algorithm \ref{alg:full}.

\begin{definition}[Separating set]
We say $Q\subseteq\cX$ is a separating set for a class $\cH:\cX\rightarrow\{0,1\}$, if for any two distinct hypotheses $h, h'\in\cH$, there exists $x\in Q$ s.t. $h(x)\neq h'(x)$.
\end{definition}

\begin{remark}
Classes for which small separating sets are known include conjunctions, disjunctions, parities, decision lists, discretized linear classifiers. Please see more elaborate discussions in \citet{SyrgkanisKS16} and \citet{Neel0W19}.
\end{remark}

\begin{algorithm}[ht]
\caption{Reduction to Context-FTPL + Online Gradient Descent}
\label{alg:full}
\begin{algorithmic}
\STATE {\bfseries Input:} Number of rounds $T$, hypothesis class $\cH$, violation size $\alpha \in (0,1]$, sensitivity $\epsilon \in (0,\alpha]$, separating set $Q\subseteq\cX$, parameters $R, \omega, \mu$\vspace{0.3em}
\STATE $\mathbf{L}$ initializes Context-FTPL with $Q$, $\omega$, and initializes history $\xi^{1} = \emptyset$, and $\lambda^1 = 0$;\vspace{0.3em}
\FOR{$t = 1, \ldots, T$}\vspace{0.3em}
\STATE $\mathbf{L}$ deploys $\pi^t\in\Delta\cH$ (implicitly by Context-FTPL($\xi^t$));\vspace{0.3em}
\STATE $\mathbf{A}$ selects $(\bar{x}^t,\bar{y}^t) \in\cX^k\times\cY^k$;\vspace{0.3em}
\STATE $\mathbf{A}$ selects panel $\bar{j}^t\in\cJ^{m_t}$, aggregation function $f^t\in\cF$;\vspace{0.3em}
\FOR{$r = 1, \ldots, R$}\vspace{0.3em}
\STATE $\mathbf{L}$ draws $h^{t_r}$ using Context-FTPL($\xi^t$);\vspace{0.3em}
\ENDFOR;\vspace{0.3em}
\STATE $\mathbf{L}$ sets $\tilde{\pi}^t = \mathbb{U}(h^{t_1},\dots,h^{t_R})$;\vspace{0.3em}
\STATE $\mathbf{L}$ queries $\rho^t = \vec{\cS}^t(\tilde{\pi}^t,\bar{x}^t,\alpha-\frac{\epsilon}{2},\bar{j}^t,f^t)$;\vspace{0.3em}
\STATE $\mathbf{L}$ updates $\xi^{t+1} = \{(\bar{L}^\tau_{\tilde{\pi}^t,\alpha-\frac{\epsilon}{2},\frac{\epsilon}{2}}(\cdot,\lambda^t),\bar{x}^\tau,\bar{y}^\tau)\}_{\tau=1}^t$;\vspace{0.3em}
\STATE $\mathbf{L}$ sets $\lambda^{t+1} = \max\{0,\lambda^t+\mu\nabla_\lambda L^t_{\tilde{\pi}^t,\alpha-\epsilon',\epsilon'}(\pi^t,\lambda^t)\}$;\vspace{0.3em}
\ENDFOR
\end{algorithmic}
\end{algorithm}

\subsection{Regret Analysis}

We proceed to our main theorem.
For the following statement, we fix $\epsilon = \alpha\in(0,1]$ (though one can, more generally, fix any $\alpha\in(0,1]$, $\epsilon\in(0,\alpha]$), and $\delta\in(0,1]$. We assume the algorithm is given access to a separating set $Q\subseteq \cX$ for $\cH$, of size $s$. We additionally set $R = 64\epsilon^{-2}\log\left(\frac{2kT}{\delta}\right)$, $\omega = \frac{\mu}{32ks}$, $\mu = T^{-\frac{1}{2}}$.

\begin{theorem}\label{thm:main}
Algorithm \ref{alg:full} obtains, for any (possibly adversarial) sequence of individuals $(\bar{x}^t)_{t=1}^T$, labels $(\bar{y}^t)_{t=1}^T$, auditors $(\bar{j}^t)_{t=1}^T$, and monotone aggregation functions $(f^t)_{t=1}^T$, with probability $1-\delta$, simultaneously:
\begin{enumerate}
    \item $\textbf{Accuracy}: 
    \text{Regret}_T(\Delta\cH^{fair}(\Psi^t)) \leq \mathcal{O}\left(s^\frac{3}{4}k^\frac{11}{4}T^{\frac{1}{2}}\log\vert\cH\vert\right)$.
    \item $\textbf{Fairness}: \sum_{t=1}^T\text{Unfair}_\alpha^t(\pi^t) \leq \mathcal{O}\left(\alpha^{-1}s^\frac{3}{4}k^\frac{7}{4}T^{\frac{3}{4}}\log\vert\cH\vert\right)$.
\end{enumerate}
While only requiring $\tilde{\mathcal{O}}\left(\alpha^{-2}\right)$ calls to a batch ERM optimization oracle every round.
\end{theorem}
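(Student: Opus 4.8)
The plan is to treat Algorithm \ref{alg:full} as a repeated saddle-point game between a primal player, who controls $\pi^t$ via Context-FTPL \citep{SyrgkanisKS16} on the (linear-in-$\pi$) losses $\bar L^t(\cdot,\lambda^t)$, and a dual player, who controls $\lambda^t$ via Online Gradient Descent \citep{Zinkevich03} on $L^t(\pi^t,\cdot)$, and to extract both objectives from a single combined regret inequality. Before anything else I would discharge the gap between the deployed (implicit) policy $\pi^t$ and the sampled approximation $\tilde\pi^t$ actually used to query the auditing scheme: since $\tilde\pi^t$ averages $R = 64\epsilon^{-2}\log(2kT/\delta)$ independent draws $h^{t_1},\dots,h^{t_R}$, a Hoeffding bound together with a union bound over the $kT$ relevant (round, individual) pairs gives that, with probability at least $1-\delta$, $|\pi^t(\bar x^{t,i}) - \tilde\pi^t(\bar x^{t,i})| \le \epsilon/8$ for every $t\in[T]$ and $i\in[k]$. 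I would condition on this event throughout, as it is exactly the hypothesis needed to invoke Lemma \ref{lem:unfairness} with $\epsilon' = \epsilon/2$, matching the query sensitivity $\alpha-\epsilon/2$ in the algorithm.

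Next I would establish two one-sided facts. First, a baseline claim: whenever the scheme flags a pair $(\bar x^l,\bar x^r)$ on $\tilde\pi^t$ at sensitivity $\alpha-\epsilon/2$, the comparator's proxy loss on that pair is non-positive. This follows by applying the monotone characterization (Lemma \ref{lem:monotone}) to reduce the aggregate decision to a single pivotal auditor $i^*$, writing the detected violation as $\tilde\pi^t(\bar x^l)-\tilde\pi^t(\bar x^r) > d^{i^*}(\bar x^l,\bar x^r)+\alpha-\epsilon/2$, using $\pi^*\in\Delta\cH^{fair}(\Psi^t)$ so that $\pi^*(\bar x^l)-\pi^*(\bar x^r)\le d^{i^*}(\bar x^l,\bar x^r)$, and substituting into Definition \ref{def:unfair-proxy}; with $\epsilon=\alpha$ the slack collapses to $\overline{\text{Unfair}}^t_{\tilde\pi^t,\alpha-\epsilon/2,\epsilon/2}(\pi^*)\le 0$. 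Second, I would record the two off-the-shelf guarantees: the Context-FTPL bound $\sum_t \bar L^t(\pi^t,\lambda^t) - \sum_t \bar L^t(\pi^*,\lambda^t) \le \mathrm{Reg}^{\mathrm{FTPL}}$, whose loss scale is $\mathcal{O}(1+\max_t\lambda^t)$ and which carries the $s,k,\log|\cH|$ dependence, and the OGD bound $\sum_t L^t(\pi^t,\lambda) - \sum_t L^t(\pi^t,\lambda^t) \le \mathrm{Reg}^{\mathrm{OGD}}$ for every fixed $\lambda\ge 0$ (the $-\mu\lambda^2/2$ term making the dual objective strongly concave).

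Adding the two regret inequalities (the regularizer reconciling $\bar L$ with $L$) yields the master inequality $\sum_t L^t(\pi^t,\lambda) - \sum_t L^t(\pi^*,\lambda^t) \le \mathrm{Reg}^{\mathrm{FTPL}} + \mathrm{Reg}^{\mathrm{OGD}}$, valid for all $\lambda\ge 0$. To read off accuracy I would set $\lambda=0$ and bound $L^t(\pi^*,\lambda^t)\le \frac1k\text{Error}^t(\pi^*)$ using the baseline claim ($\lambda^t\overline{\text{Unfair}}^t(\pi^*)\le 0$) and $-\mu(\lambda^t)^2/2\le 0$; multiplying through by $k$ converts the normalized error gap into $\text{Regret}_T(\Delta\cH^{fair}(\Psi^t))$ and gives the $s^{3/4}k^{11/4}T^{1/2}\log|\cH|$ bound. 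To read off fairness I would choose $\lambda$ to maximize $\lambda\sum_t\overline{\text{Unfair}}^t(\pi^t)$ against the $-\mu\lambda^2 T/2$ penalty, lower-bound the normalized error gap by $-T$, drop the non-negative $\frac\mu2\sum_t(\lambda^t)^2$ term, solve the resulting quadratic in $\lambda$, and finally convert $\sum_t\overline{\text{Unfair}}^t(\pi^t)$ into $\sum_t\text{Unfair}^t_\alpha(\pi^t)$ via Lemma \ref{lem:unfairness} (losing a factor $4/\epsilon=4/\alpha$); with $\mu=T^{-1/2}$ this produces the $\alpha^{-1}s^{3/4}k^{7/4}T^{3/4}\log|\cH|$ bound. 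The oracle count is immediate: each round spends $R=\tilde{\mathcal{O}}(\epsilon^{-2})=\tilde{\mathcal{O}}(\alpha^{-2})$ calls to form $\tilde\pi^t$ plus $\mathcal{O}(1)$ to deploy $\pi^t$.

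The main obstacle I anticipate is controlling the scale of the losses handed to Context-FTPL, since $\bar L^t$ is linear in $\pi$ but has magnitude growing with $\lambda^t$, and $\lambda^t$ is itself produced by the coupled OGD update. I would need an a-priori uniform bound on $\max_t\lambda^t$ (obtained from the regularized update, where the $-\mu\lambda^2/2$ term caps the effective multiplier), feed it into the Context-FTPL regret, and then verify that the same $\mu$ keeping the primal loss scale under control also delivers the advertised split between the $T^{1/2}$ accuracy rate and the $T^{3/4}$ fairness rate. The secondary delicate point is the bookkeeping that keeps the elicitation sensitivity ($\alpha-\epsilon/2$ on $\tilde\pi^t$), the proxy slack ($\beta=\epsilon/2$), the concentration radius ($\epsilon/8$), and the conversion factor of Lemma \ref{lem:unfairness} mutually consistent, so that both the baseline claim and the counterfactual bound on the $\alpha$-violations of the true policy $\pi^t$ go through simultaneously.
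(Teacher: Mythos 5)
Your plan reproduces the paper's architecture almost exactly: the same primal-dual reduction (Context-FTPL on the linear losses $\bar L^t(\cdot,\lambda^t)$, Online Gradient Descent on $L^t(\pi^t,\cdot)$), the same Hoeffding-plus-union-bound concentration for $\tilde\pi^t$ with radius $\epsilon/8$, the same pivotal-auditor argument giving $\overline{\text{Unfair}}^t_{\tilde\pi^t,\alpha-\epsilon/2,\epsilon/2}(\pi^*)\le 0$ (Lemma \ref{lem:baseline}), the same master inequality, the same two instantiations of $\lambda$ (zero for accuracy; the maximizer of the quadratic for fairness), the same conversion via Lemma \ref{lem:unfairness}, and the same oracle count. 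However, the step you yourself flag as the main obstacle — controlling the scale of the losses handed to Context-FTPL — is resolved by a mechanism that does not work, and this is a genuine gap.

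You propose to derive an a-priori uniform bound on $\max_t\lambda^t$ from the regularized update and feed it into Context-FTPL's regret as a loss-scale parameter. But the only cap the update $\lambda^{t+1}=\max\{0,(1-\mu^2)\lambda^t+\mu\,\overline{\text{Unfair}}^t_{\tilde\pi^t,\alpha-\epsilon',\epsilon'}(\pi^t)\}$ provides is $\lambda^t=O(1/\mu)$: an adversary forcing a reported violation every round drives $\lambda^t$ toward $\Theta(1/\mu)=\Theta(\sqrt T)$ under $\mu=T^{-1/2}$. Feeding $\Lambda=\Theta(\sqrt{T})$ into the FTPL bound gives $4\omega k s\,T(1+\Lambda)^2+\frac{10}{\omega}s^{1/2}k^{1/2}\log\vert\cH\vert$, which even after optimizing $\omega$ is $\Theta\bigl((1+\Lambda)\,s^{3/4}k^{3/4}\sqrt{T\log\vert\cH\vert}\bigr)=\Theta\bigl(s^{3/4}k^{3/4}\,T\sqrt{\log\vert\cH\vert}\bigr)$ — linear in $T$, hence vacuous for the claimed $T^{1/2}$ regret. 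The paper never bounds $\lambda^t$ at all. It keeps the data-dependent quantity $\sum_t\Vert\bar L^t\Vert_*^2\le\sum_t(1+\lambda^t)^2$, sets $\omega=\frac{\mu}{32ks}$ so that this enters as $\frac{\mu}{8}\sum_t(1+\lambda^t)^2\le\frac{\mu}{4}T+\frac{\mu}{4}\sum_t(\lambda^t)^2$, and then cancels this term (together with OGD's $\frac{\mu^3}{2}\sum_t(\lambda^t)^2$) against the \emph{negative} term $-\frac{\mu}{2}\sum_t(\lambda^t)^2$ that the regularizer $-\mu\lambda^2/2$ produces when $\sum_t L^t(\pi^t,\lambda)-\sum_t L^t(\pi^*,\lambda^t)$ is decomposed into error and unfairness terms: for $\mu\le 1/\sqrt 2$ the net coefficient $\frac{\mu^3}{2}-\frac{\mu}{4}$ is non-positive, so $\sum_t(\lambda^t)^2$ drops out of the final bound no matter how large the multipliers actually get. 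This cancellation — enabled jointly by the regularizer and the specific coupling $\omega\propto\mu$ — is the idea your plan is missing; without it, no uniform bound on the multipliers can recover the advertised $(T^{1/2},T^{3/4})$ pair of rates.
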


\begin{remark}
More generally, one can interpolate between the objectives to achieve any point on the $(\mathcal{O}(T^{\frac{1}{2}+2b}),\mathcal{O}(T^{\frac{3}{4}-b}))$ frontier of regret, number of violations, for $0\leq b \leq 1/4$.\footnote{This can be achieved by multiplying the error term in the Lagrangian losses in Definitions \ref{def:Lagrangian}, \ref{def:Lagrangian2} by $T^{2b}$.} In particular, it is possible to obtain a uniform upper bound of $\mathcal{O}(T^{\frac{2}{3}})$ for each of accuracy, fairness. Our bounds uniformly improve on the formerly best known upper bound of $\mathcal{O}(T^{\frac{7}{9}})$ in \citet{BechavodJW20}, while also reducing the per-round oracle complexity from $T$ to $\tilde{\mathcal{O}}\left(\alpha^{-2}\right)$.
\end{remark}

In order to prove Theorem \ref{thm:main}, we begin by stating and proving two useful lemmas (proofs are in Appendix \ref{app:proofs}).

\begin{lemma} \label{lem:baseline}
Assume $\cS^t$ is monotone for all $t\in[T]$. Then, for $\alpha\in(0,1]$, $\epsilon\in(0,\alpha]$, and $\epsilon' = \frac{\epsilon}{2}$, it holds that 
\[
\Delta\cH_{\alpha-\epsilon}^{fair}(\Psi^t) \subseteq \{\pi:\forall t\in[T], \overline{\text{Unfair}}^t_{\tilde{\pi}^t,\alpha-\epsilon',\epsilon'}(\pi) \leq 0\}.
\]
\end{lemma}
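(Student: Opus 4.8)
The plan is to show a set inclusion, so I would fix an arbitrary policy $\pi$ in the left-hand set $\Delta\cH_{\alpha-\epsilon}^{fair}(\Psi^t)$ and a round $t\in[T]$, and argue that $\overline{\text{Unfair}}^t_{\tilde{\pi}^t,\alpha-\epsilon',\epsilon'}(\pi) \leq 0$. The key structural fact is that by Definition \ref{def:comparator}, $\pi$ satisfies $\text{Unfair}^t_{\alpha-\epsilon}(\pi) = 0$, meaning $\pi$ has \emph{no} $(\alpha-\epsilon)$-violation on any pair in $\bar{x}^t$ under the (monotone) scheme $\cS^t$. I want to leverage this to control the proxy loss, whose value depends on whichever pair $(\bar{x}^l,\bar{x}^r)$ the scheme flags when queried on the \emph{approximate} policy $\tilde{\pi}^t$ at the reduced sensitivity $\alpha-\epsilon'$.

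First I would split into the two cases defining the proxy loss (Definition \ref{def:unfair-proxy}). If $\vec{\cS}^t(\tilde{\pi}^t,\bar{x}^t,\alpha-\epsilon') = \text{Null}$, then by definition $\overline{\text{Unfair}}^t_{\tilde{\pi}^t,\alpha-\epsilon',\epsilon'}(\pi) = 0 \leq 0$ and we are done. The substantive case is when the scheme flags a pair $(\bar{x}^l,\bar{x}^r)$, so that $\overline{\text{Unfair}}^t_{\tilde{\pi}^t,\alpha-\epsilon',\epsilon'}(\pi) = [\pi(\bar{x}^l)-\pi(\bar{x}^r)] - [\tilde{\pi}^t(\bar{x}^l)-\tilde{\pi}^t(\bar{x}^r)] + \epsilon'$. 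Here I would use monotonicity of $\cS^t$ together with Lemma \ref{lem:monotone}: on the pair $(\bar{x}^l,\bar{x}^r)$ the scheme acts like a single ``pivot'' auditor $j^{i^*}$ with distance $d^{i^*}(\bar{x}^l,\bar{x}^r)$, so the flag at sensitivity $\alpha-\epsilon'$ means $\tilde{\pi}^t(\bar{x}^l)-\tilde{\pi}^t(\bar{x}^r) > d^{i^*}(\bar{x}^l,\bar{x}^r) + (\alpha-\epsilon')$, while the fact that $\pi$ has no $(\alpha-\epsilon)$-violation on this same pair gives $\pi(\bar{x}^l)-\pi(\bar{x}^r) \leq d^{i^*}(\bar{x}^l,\bar{x}^r) + (\alpha-\epsilon)$. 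Subtracting the strict inequality from the non-strict one eliminates the common (and otherwise unknown) distance term.

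Carrying out that subtraction yields
\[
[\pi(\bar{x}^l)-\pi(\bar{x}^r)] - [\tilde{\pi}^t(\bar{x}^l)-\tilde{\pi}^t(\bar{x}^r)] < (\alpha-\epsilon) - (\alpha-\epsilon') = \epsilon'-\epsilon,
\]
and since $\epsilon' = \frac{\epsilon}{2}$ we get $\epsilon'-\epsilon = -\epsilon'$, so adding $\epsilon'$ to both sides bounds the proxy loss strictly below $0$. Combining both cases gives $\overline{\text{Unfair}}^t_{\tilde{\pi}^t,\alpha-\epsilon',\epsilon'}(\pi) \leq 0$ for every $t$, hence $\pi$ lies in the right-hand set, establishing the inclusion.

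I expect the main obstacle to be handling the fact that the scheme is queried on $\tilde{\pi}^t$ rather than on $\pi$ itself, while the comparator guarantee is stated about $\pi$. The cancellation of the unknown pivot distance $d^{i^*}(\bar{x}^l,\bar{x}^r)$ is what makes this work, and it crucially requires the characterization in Lemma \ref{lem:monotone} to guarantee that the \emph{same} single auditor governs the decision on $(\bar{x}^l,\bar{x}^r)$ in both the hypothetical query on $\pi$ (at sensitivity $\alpha-\epsilon$) and the actual query on $\tilde{\pi}^t$ (at sensitivity $\alpha-\epsilon'$) — so the two inequalities reference the identical distance term and are genuinely subtractable. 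I would be careful to invoke monotonicity precisely at this point, and to track that the sensitivity gap $\epsilon - \epsilon' = \epsilon'$ is exactly what absorbs the $+\epsilon'$ slack in the proxy loss.
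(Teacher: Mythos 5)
Your proof is correct and follows essentially the same route as the paper's: the same case split on whether $\vec{\cS}^t(\tilde{\pi}^t,\bar{x}^t,\alpha-\epsilon')$ is Null, the same invocation of Lemma \ref{lem:monotone} to reduce the scheme to a single pivot auditor whose distance term cancels, and the same arithmetic with the sensitivity gap $\epsilon-\epsilon'=\epsilon'$ absorbing the $+\epsilon'$ slack. The only cosmetic difference is that your subtraction yields a strict bound ($<0$) where the paper writes $\leq 0$; both suffice.
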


\begin{lemma}\label{lem:approx-policy}
With probability $1-\delta$ (over the draw of $\{h^{t_r}\}_{t=1,r=1}^{t=T,r=R}$), 
\[
\forall t\in[T],i\in[k]: \left\vert\pi^t(\bar{x}^{t,i}) - \tilde{\pi}^t(\bar{x}^{t,i})\right\vert\leq \sqrt{\frac{\log\left(\frac{2kT}{\delta}\right)}{2R}}.
\]
In particular, setting $R = \frac{64\log\left(\frac{2kT}{\delta}\right)}{\epsilon^2}$ results in the right hand side being $\frac{\epsilon}{8}$.
\end{lemma}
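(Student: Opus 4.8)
The plan is to recognize that, for each fixed round $t$ and index $i\in[k]$, the quantity $\tilde{\pi}^t(\bar{x}^{t,i})$ is simply an empirical average of $R$ i.i.d.\ Bernoulli random variables. By construction $\tilde{\pi}^t = \mathbb{U}(h^{t_1},\dots,h^{t_R})$ is the uniform mixture over the sampled classifiers, so $\tilde{\pi}^t(\bar{x}^{t,i}) = \frac{1}{R}\sum_{r=1}^R \mathbbm{1}[h^{t_r}(\bar{x}^{t,i})=1]$, where each indicator has mean exactly $\pi^t(\bar{x}^{t,i}) = \Pr_{h\sim\pi^t}[h(\bar{x}^{t,i})=1]$. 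Thus $|\pi^t(\bar{x}^{t,i}) - \tilde{\pi}^t(\bar{x}^{t,i})|$ is the deviation of an empirical mean of $[0,1]$-bounded i.i.d.\ variables from its expectation, which is the canonical setting for Hoeffding's inequality.

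First I would fix $t$ and condition on the history $\xi^t$. This is the one point that requires care, and I expect it to be the main (indeed the only) genuine obstacle: because Context-FTPL is adaptive, the implicit policy $\pi^t$ itself depends on the randomness used in earlier rounds (earlier draws feed into $\xi^{t+1}$, and so on), so the samples are not i.i.d.\ across rounds. However, conditioned on $\xi^t$, the policy $\pi^t$ is a \emph{fixed} distribution over $\cH$, and the $R$ draws $h^{t_1},\dots,h^{t_R}$ are fresh i.i.d.\ samples from it. Hence, conditionally on $\xi^t$, Hoeffding's inequality gives, for any $\eta>0$,
\[
\Pr\Big[\,|\pi^t(\bar{x}^{t,i}) - \tilde{\pi}^t(\bar{x}^{t,i})| > \eta \,\Big|\, \xi^t\Big] \leq 2\exp(-2R\eta^2).
\]
Since this bound is uniform over every realization of $\xi^t$, taking expectations removes the conditioning, so the same bound holds unconditionally for each fixed pair $(t,i)$.

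Next I would take a union bound over all $kT$ pairs $(t,i)\in[T]\times[k]$, yielding a total failure probability of at most $2kT\exp(-2R\eta^2)$. Setting this equal to $\delta$ and solving for $\eta$ gives exactly $\eta = \sqrt{\log(2kT/\delta)/(2R)}$, the claimed bound; on the complementary event, of probability at least $1-\delta$, the stated inequality holds simultaneously for all $t$ and $i$.

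Finally, for the second assertion I would substitute $R = 64\epsilon^{-2}\log(2kT/\delta)$ into $\sqrt{\log(2kT/\delta)/(2R)}$, which collapses to $\sqrt{\epsilon^2/128} = \epsilon/(8\sqrt{2}) \leq \epsilon/8$, matching the sensitivity requirement of $\epsilon/8$ (i.e.\ $\epsilon'/4$ with $\epsilon'=\epsilon/2$) used in Lemma \ref{lem:unfairness} and in Figure \ref{fig:elicitation}. Once the conditioning argument above is in place, everything else is a routine Hoeffding-plus-union-bound computation.
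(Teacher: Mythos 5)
Your proposal is correct and follows essentially the same route as the paper: an additive Chernoff/Hoeffding bound for each fixed pair $(t,i)$ followed by a union bound over all $kT$ pairs, with the same constants. Your explicit conditioning on $\xi^t$ to handle the adaptivity of Context-FTPL (and your observation that $\epsilon/(8\sqrt{2}) \leq \epsilon/8$, so the stated $\epsilon/8$ is a slight relaxation) makes rigorous steps the paper leaves implicit, but the argument is the same.
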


\begin{proof}[Proof of Theorem \ref{thm:main}]
Set $R = \frac{64\log\left(\frac{2kT}{\delta}\right)}{\epsilon^2}$, and denote $\epsilon' = \frac{\epsilon}{2}$.

Using Theorem 2 from \citet{SyrgkanisKS16} for the small separator setting, along with the fact that the Lagrangian loss (Definition \ref{def:Lagrangian}) is linear in the first argument, and denoting $\Vert \bar{L}^t\Vert_* := \max_{h\in\cH}\vert \bar{L}^t(h,\lambda^t)\vert$, we know that for all $\pi\in\Delta\cH$,
\begin{align}\label{Eq:FTPL}
\sum_{t=1}^T L^t_{\tilde{\pi}^t,\alpha-\epsilon',\epsilon'}(\pi^t,\lambda^t) - \sum_{t=1}^T L^t_{\tilde{\pi}^t,\alpha-\epsilon',\epsilon'}(\pi,\lambda^t)
&=
\sum_{t=1}^T \bar{L}^t_{\tilde{\pi}^t,\alpha-\epsilon',\epsilon'}(\pi^t,\lambda^t) - \sum_{t=1}^T \bar{L}^t_{\tilde{\pi}^t,\alpha-\epsilon',\epsilon'}(\pi,\lambda^t)\notag\\
&\leq 
4\omega k s \sum_{t=1}^T \Vert \bar{L}^t\Vert_*^2 + \frac{10}{\omega} s^\frac{1}{2} k^\frac{1}{2}\log\vert\cH\vert.
\end{align}

Using the analysis in \citet{Zinkevich03}, we know that, for all $\lambda \in \mathbb{R}^+$,
\begin{align}
\sum_{t=1}^T L^t_{\tilde{\pi}^t,\alpha-\epsilon',\epsilon'}(\pi^t,\lambda) - \sum_{t=1}^T L^t_{\tilde{\pi}^t,\alpha-\epsilon',\epsilon'}(\pi^t,\lambda^t) 
&\leq \frac{1}{2\mu}\lambda^2 + \frac{\mu}{2}\sum_{t=1}^T\left(\frac{\partial L^t(\pi^t,\lambda^t)}{\partial\lambda^t}\right)^2 \notag\\
&= \frac{1}{2\mu}\lambda^2 + \frac{\mu}{2}\sum_{t=1}^T \left(\overline{\text{Unfair}}^t_{\tilde{\pi}^t,\alpha-\epsilon',\epsilon'}(\pi^t) - \mu\lambda^t\right)^2 \notag\\
&\leq \frac{1}{2\mu}\lambda^2 + \frac{\mu}{2}\sum_{t=1}^T\left[ \overline{\text{Unfair}}^t_{\tilde{\pi}^t,\alpha-\epsilon',\epsilon'}(\pi^t)\right]^2 + \frac{\mu^3}{2}\sum_{t=1}^T(\lambda^{t})^2. \label{Eq:GD}
\end{align}

Combining Inequalities \ref{Eq:FTPL} and \ref{Eq:GD}, we know that for all $\pi\in\Delta\cH$, $\lambda \in \mathbb{R}^+$,
\begin{align}
&\sum_{t=1}^T L^t_{\tilde{\pi}^t,\alpha-\epsilon',\epsilon'}(\pi^t,\lambda) - \sum_{t=1}^T L^t_{\tilde{\pi}^t,\alpha-\epsilon',\epsilon'}(\pi,\lambda^t)\notag\\
&\leq 
4\omega k s \sum_{t=1}^T \Vert \bar{L}^t\Vert_*^2 + \frac{10}{\omega} s^\frac{1}{2} k^\frac{1}{2}\log\vert\cH\vert + \frac{1}{2\mu}\lambda^2 + \frac{\mu}{2}\sum_{t=1}^T \left[\overline{\text{Unfair}}^t_{\tilde{\pi}^t,\alpha-\epsilon',\epsilon'}(\pi^t)\right]^2 + \frac{\mu^3}{2}\sum_{t=1}^T(\lambda^{t})^2. \label{eq:full-combined}
\end{align}

Next, for all $\pi\in\Delta\cH$, $\lambda \in \mathbb{R}^+$,
\begin{align}
&\sum_{t=1}^T \frac{1}{k}\cdot\text{Error}^t(\pi^t)  - \sum_{t=1}^T \frac{1}{k}\cdot\text{Error}^t(\pi) + \sum_{t=1}^T \lambda\cdot \overline{\text{Unfair}}^t_{\tilde{\pi}^t,\alpha-\epsilon',\epsilon'}(\pi^t) - \sum_{t=1}^T \lambda^t\cdot \overline{\text{Unfair}}^t_{\tilde{\pi}^t,\alpha-\epsilon',\epsilon'}(\pi)\notag\\
&\stackrel{(i)}{\leq} 4\omega k s \sum_{t=1}^T \Vert \bar{L}^t\Vert_*^2 + \frac{10}{\omega} s^\frac{1}{2} k^\frac{1}{2}\log\vert\cH\vert + \frac{1}{2\mu}\lambda^2 + \frac{\mu}{2}\sum_{t=1}^T \left[\overline{\text{Unfair}}^t_{\tilde{\pi}^t,\alpha-\epsilon',\epsilon'}(\pi^t)\right]^2 + \frac{\mu^3}{2}\sum_{t=1}^T(\lambda^{t})^2 + \frac{\mu}{2}T\lambda^2 - \frac{\mu}{2}\sum_{t=1}^T (\lambda^t)^2\notag\\
&\stackrel{(ii)}{\leq}  \frac{\mu}{8}\sum_{t=1}^T \left[1 + \lambda^t\right]^2 + \frac{320}{\mu} s^\frac{3}{2} k^\frac{3}{2}\log\vert\cH\vert + \frac{1}{2}\left[\mu T + \frac{1}{\mu}\right]\lambda^2 + \frac{\mu}{2}\sum_{t=1}^T \left[\overline{\text{Unfair}}^t_{\tilde{\pi}^t,\alpha-\epsilon',\epsilon'}(\pi^t)\right]^2 + \left[\frac{\mu^3}{2} - \frac{\mu}{2}\right]\sum_{t=1}^T(\lambda^{t})^2\notag\\
&\stackrel{(iii)}{\leq} \frac{\mu}{4} T + \frac{320}{\mu} s^\frac{3}{2} k^\frac{3}{2}\log\vert\cH\vert + \frac{1}{2}\left[\mu T + \frac{1}{\mu}\right]\lambda^2 + \frac{\mu}{2}\sum_{t=1}^T \left[\overline{\text{Unfair}}^t_{\tilde{\pi}^t,\alpha-\epsilon',\epsilon'}(\pi^t)\right]^2 + \left[\frac{\mu^3}{2}-\frac{\mu}{4}\right] \sum_{t=1}^T(\lambda^t)^2\notag\\
&\stackrel{(iv)}{\leq}
\frac{\mu}{4} T + \frac{320}{\mu} s^\frac{3}{2} k^\frac{3}{2}\log\vert\cH\vert + \frac{1}{2}\left[\mu T + \frac{1}{\mu}\right]\lambda^2 + \frac{\mu}{2}\sum_{t=1}^T \left[\overline{\text{Unfair}}^t_{\tilde{\pi}^t,\alpha-\epsilon',\epsilon'}(\pi^t)\right]^2. \label{eq:full-simplified}
\end{align}

Where the transitions are:

$(i)$ Using Equation \ref{eq:full-combined} and Definition \ref{def:Lagrangian}.

$(ii)$ Noting that $\Vert \bar{L}^t \Vert_* \leq 1 + \lambda^t$, and setting $\omega = \frac{\mu}{32 k s}$.

$(iii)$ Using the fact that $\forall a,b\in\mathbb{R}^+:2ab \leq a^2 + b^2$.

$(iv)$ For $0\leq\mu\leq \frac{1}{\sqrt{2}}$, we know that $\frac{\mu^3}{2} -\frac{\mu}{4} \leq 0$.

~

To upper bound the regret, we set $\pi = \pi^*$, $\lambda = 0$. Using Lemma \ref{lem:baseline}, we know that, for all $t\in[T]$,  $\overline{\text{Unfair}}^t_{\tilde{\pi}^t,\alpha-\epsilon',\epsilon'}(\pi^*) \leq 0$. Using Lemma \ref{lem:approx-policy} along with the triangle inequality, we know that with probability $1-\delta$, simultaneously for all $t\in[T]$, $\overline{\text{Unfair}}^t_{\tilde{\pi}^t,\alpha-\epsilon',\epsilon'}(\pi^t) \geq \frac{\epsilon}{4}$. Hence, $\sum_{t=1}^T \lambda\cdot \overline{\text{Unfair}}^t_{\tilde{\pi}^t,\alpha-\epsilon',\epsilon'}(\pi^t) - \sum_{t=1}^T \lambda^t\cdot \overline{\text{Unfair}}^t_{\tilde{\pi}^t,\alpha-\epsilon',\epsilon'}(\pi) \geq 0$, and using Equation \ref{eq:full-simplified} we get
\begin{equation}\label{eq:regret}
\sum_{t=1}^T \text{Error}^t(\pi^t)  - \sum_{t=1}^T \text{Error}^t(\pi^*) 
\leq 
k\left[\frac{\mu}{4} T + \frac{320}{\mu} s^\frac{3}{2} k^\frac{3}{2}\log\vert\cH\vert + \frac{\mu}{2}\sum_{t=1}^T \left[\overline{\text{Unfair}}^t_{\tilde{\pi}^t,\alpha-\epsilon',\epsilon'}(\pi^t)\right]^2\right].
\end{equation}

To upper bound the number of fairness violations, note that 
\[
\sum_{t=1}^T \frac{1}{k}\cdot\text{Error}^t(\pi^t)  - \sum_{t=1}^T \frac{1}{k}\cdot\text{Error}^t(\pi) \geq -\frac{T}{k}.
\]

and that with probability $1-\delta$ (see Lemma \ref{lem:approx-policy}), 
\[
\forall t\in[T]: \overline{\text{Unfair}}^t_{\tilde{\pi}^t,\alpha-\epsilon',\epsilon'}(\pi^t) = \left[\pi^t(\rho^1)-\pi^t(\rho^2)\right] -  \left[\tilde{\pi}^t(\rho^1) - \tilde{\pi}^t(\rho^2)\right] + \epsilon' \geq - \frac{\epsilon'}{2} + \epsilon' = \frac{\epsilon'}{2} > 0. 
\]

And proceed to select 
\[
\lambda = \frac{\sum_{t=1}^T \overline{\text{Unfair}}^t_{\tilde{\pi}^t,\alpha-\epsilon',\epsilon'}(\pi^t)}{\mu T + \frac{1}{\mu}}.
\]

We again set $\pi=\pi^*$, and remember that using Lemma \ref{lem:baseline}, for all $t\in[T]$, $\overline{\text{Unfair}}^t_{\tilde{\pi}^t,\alpha-\epsilon',\epsilon'}(\pi^*) \leq 0$.

Hence, using Equation \ref{eq:full-simplified},
\begin{equation}\label{eq:violation}
\left[\sum_{t=1}^T \overline{\text{Unfair}}^t_{\tilde{\pi}^t,\alpha-\epsilon',\epsilon'}(\pi^t)\right]^2 
\leq 2\left[\mu T + \frac{1}{\mu}\right] \left[\frac{\mu}{4} T + \frac{320}{\mu} s^\frac{3}{2} k^\frac{3}{2}\log\vert\cH\vert + \frac{\mu}{2}\sum_{t=1}^T \left[\overline{\text{Unfair}}^t_{\tilde{\pi}^t,\alpha-\epsilon',\epsilon'}(\pi^t)\right]^2 + \frac{T}{k}\right].    
\end{equation}

Next, setting $\mu = T^{-\frac{1}{2}}$, we can bound the regret using Equation \ref{eq:regret}:
\[
\sum_{t=1}^T \text{Error}^t(\pi^t) - \sum_{t=1}^T \text{Error}^t(\pi^*) 
\leq
\mathcal{O}\left(s^\frac{3}{2}k^\frac{5}{2}T^{\frac{1}{2}}\log\vert\cH\vert\right).
\]
And the unfairness proxy loss, using Equation \ref{eq:violation}:
\[
\sum_{t=1}^T \overline{\text{Unfair}}^t_{\tilde{\pi}^t,\alpha-\epsilon',\epsilon'}(\pi^t)
\leq
\mathcal{O}\left(s^\frac{3}{2}k^\frac{3}{2}T^\frac{3}{4}\log\vert\cH\vert\right).
\]

Finally, using Lemma \ref{lem:unfairness},
\begin{equation*} \label{eq:unfairness}
\sum_{t=1}^T\text{Unfair}_{\alpha}^t({\pi}^t) 
\leq
\frac{2}{\epsilon'}\sum_{t=1}^T\overline{\text{Unfair}}^t_{\tilde{\pi}^t,\alpha-\epsilon',\epsilon'}(\pi^t) \leq \mathcal{O}\left(\frac{1}{\epsilon}s^\frac{3}{2}k^\frac{3}{2}T^\frac{3}{4}\log\vert\cH\vert\right).
\end{equation*}
Finally, note that for Theorem \ref{thm:main}, we selected $\epsilon = \alpha$. This concludes the proof.
\end{proof}

\section{Partial Information} \label{sec:partial}

\subsection{Online Classification with Individual Fairness under Partial Information}
In this section, we focus on the setting where the learner only observes one-sided label feedback, for individuals who have received a positive prediction.\footnote{The one-sided label feedback setting was first introduced as the ``Apple tasting'' problem by \citet{Helmbold}.} (In the context of algorithmic fairness, also see, e.g. \citet{Lakkaraju17,LakkarajuRudin17,Ustun17,SelectiveExpert,ensign18b,ensign18a,CostonRC21}). Note that such feedback structure is \emph{extremely prevalent} in domains where fairness is a concern --- a lender only observes repayment by applicants that have actually been approved for a loan to begin with, a university can only track the academic performance for candidates who have been admitted, etc. The key challenge in this setting is that the learner may not even observe \emph{her own loss}. Note that this is \emph{different} from a bandit setting, since feedback is available for the \emph{entire class} $\cH$ when a positive prediction is made, while \emph{no feedback} (even for the deployed policy) is available for a negative prediction. The setting is formally described in Algorithm \ref{alg:setting-partial}.

\begin{algorithm}[H]
\caption{Online Learning with Individual Fairness under Partial Information}
\label{alg:setting-partial}
\begin{algorithmic}
\STATE {\bfseries Input:} Number of rounds $T$, hypothesis class $\cH$, violation size $\alpha \in (0,1]$
\FOR{$t = 1, \ldots, T$}\vspace{0.3em}
\STATE $\mathbf{L}$ deploys $\pi^t\in\Delta\cH$;\vspace{0.3em}
\STATE $\mathbf{A}$ selects $(\bar{x}^t,\bar{y}^t) \in\cX^k\times\cY^k$, $\mathbf{L}$ only observes $\bar{x}^t$\vspace{0.3em};
\STATE $\mathbf{A}$ selects auditing scheme $\cS^t$ (fixing $\bar{j}^t,f^t$);\vspace{0.3em}
\STATE $\mathbf{L}$ draws $h^t\sim\pi^t$, predicts $\hat{y}^{t,i} = h^t(\bar{x}^{t,i})$, $\forall i \in [k]$;\vspace{0.3em}
\STATE $\mathbf{L}$ suffers misclassification loss $\text{Error}(h^t,\bar{x}^t, \bar{y}^t)$ (not necessarily observed by $\mathbf{L}$);\vspace{0.3em}
\STATE $\mathbf{L}$ suffers unfairness loss $\text{Unfair}(\pi^t,\bar{x}^t,\cS^t,\alpha)$;\vspace{0.3em}
\STATE $\mathbf{L}$ observes $\bar{x}^t,\bar{y}^{t,i} \text{ iff } \hat{y}^{t,i} =1,\rho^t = \vec{\cS}^t(\pi^t,\bar{x}^t,\alpha,f^t,\bar{j}^t)$;\vspace{0.3em}
\ENDFOR
\end{algorithmic}
\end{algorithm}

\subsection{Reduction to Context-Semi-Bandit-FTPL + Online Gradient Descent}

Next, we present and analyze an oracle-efficient algorithm using a reduction to Context-Semi-Bandit-FTPL \citep{SyrgkanisKS16} and Online Gradient Descent \citep{Zinkevich03} for the online classification setting with individual fairness and one-sided label feedback. Note, in particular, that the losses in the generated sequence $L^1,\dots,L^T$ are \emph{linear} in $\pi$. On each timestep, after the constraint elicitation procedure (similar to the one in Algorithm \ref{alg:full}), Context-Semi-Bandit-FTPL calls the optimization oracle to calculate a policy $h^t\in\cH$, then invokes a geometric resampling process to estimate the loss of $h^t$ (since loss for negatively predicted coordinates is not available). The resampling process itself requires $kM$ calls to the optimization oracle. Finally, note that since we only need the policies $h^{t_1},\dots,h^{t_R}$ on each round for the purpose of querying the auditing scheme, we do not have to issue the geometric resampling loss estimation procedure for these. Our construction is formally given in Algorithm \ref{alg:partial}.

\begin{algorithm}[t]
\caption{Reduction to Context-Semi-Bandit-FTPL + Online Gradient Descent}
\label{alg:partial}
\begin{algorithmic}
\STATE {\bfseries Input:} Number of rounds $T$, hypothesis class $\cH$, violation size $\alpha \in (0,1]$, sensitivity $\epsilon \in (0,\alpha]$, separating set $Q\subseteq\cX$, parameters $R, \omega, M$\vspace{0.3em}
\STATE $\mathbf{L}$ initializes Context-Semi-Bandit-FTPL with $Q$, $\omega, M$, initializes history $\xi^{1} = \emptyset$, $\lambda^1 = 0$;\vspace{0.3em}
\FOR{$t = 1, \ldots, T$}\vspace{0.3em}
\STATE $\mathbf{L}$ deploys $\pi^t\in\Delta\cH$ (implicitly using Context-Semi-Bandit-FTPL($\xi^t$));\vspace{0.3em}
\STATE $\mathbf{A}$ selects $(\bar{x}^t,\bar{y}^t) \in\cX^k\times\cY^k$, $\mathbf{L}$ only observes $\bar{x}^t$;\vspace{0.3em}
\STATE $\mathbf{A}$ selects panel $\bar{j}^t\in\cJ^{m_t}$, aggregation function $f^t\in\cF$;\vspace{0.3em}
\FOR{$r = 1, \ldots, R$}\vspace{0.3em}
\STATE $\mathbf{L}$ draws $h^{t_r}$ using Context-Semi-Bandit-FTPL($\xi^t$); \COMMENT{without performing loss estimation}\vspace{0.3em}
\ENDFOR;\vspace{0.3em}
\STATE $\mathbf{L}$ sets $\tilde{\pi}^t = \mathbb{U}(h^{t_1},\dots,h^{t_R})$;\vspace{0.3em}
\STATE $\mathbf{L}$ queries $\rho^t = \vec{\cS}^t(\tilde{\pi}^t,\bar{x}^t,\alpha-\frac{\epsilon}{2},\bar{j}^t,f^t)$;\vspace{0.3em}
\STATE $\mathbf{L}$ draws $h^t$ using Context-Semi-Bandit-FTPL($\xi^t$); \COMMENT{with loss estimation}\vspace{0.3em} 
\STATE $\mathbf{L}$ predicts $\hat{y}^{t,i} = h^t(\bar{x}^{t,i})$, $\forall i \in [k]$, observes $\bar{\bar{y}}^t = \{y^{t,i} : \hat{y}^{t,i} = 1\}$;\vspace{0.3em}
\STATE $\mathbf{L}$ updates history $\xi^{t+1} = \{\bar{L}^t_{\tilde{\pi}^t,\alpha-\frac{\epsilon}{2},\frac{\epsilon}{2}}(\cdot,\lambda^t),\bar{x}^t, \bar{\bar{y}}^t\}_{q=1}^t$;\vspace{0.3em}
\STATE $\mathbf{L}$ sets $\lambda^{t+1} = \max\{0,\lambda^t+\mu\nabla_\lambda L^t_{\tilde{\pi}^t,\alpha-\epsilon',\epsilon'}(\pi^t,\lambda^t)\}$;\vspace{0.3em}
\ENDFOR
\end{algorithmic}
\end{algorithm}

\subsection{Regret Analysis}

For the following statement, we fix $\epsilon = \alpha\in(0,1]$ (though one can, more generally, fix any $\alpha\in(0,1]$, $\epsilon\in(0,\alpha]$), and $\delta\in(0,1]$. We assume the algorithm is given access to a separating set $Q\subseteq \cX$ for $\cH$, of size $s$. We additionally set $R = 64\epsilon^{-2}\log\left(\frac{2kT}{\delta}\right)$, $M = \frac{16k}{\mu e}$, $\omega = \frac{\mu^2 e}{512sk^4}$, $\mu = T^{-\frac{1}{3}}$.

\begin{theorem} \label{thm:main-partial}
Algorithm \ref{alg:partial} obtains, in the one-sided label feedback setting, for any (possibly adversarial) sequence of individuals $(\bar{x}^t)_{t=1}^T$, labels $(\bar{y}^t)_{t=1}^T$, auditors $(\bar{j}^t)_{t=1}^T$, and monotone aggregation functions $(f^t)_{t=1}^T$, with probability $1-\delta$, simultaneously:
\begin{enumerate}
    \item \textbf{Accuracy}: $\text{Regret}_T(\Delta\cH^{fair}(\Psi^t)) \leq \mathcal{O}\left(s^\frac{3}{2}k^\frac{11}{2}T^{\frac{2}{3}}\log\vert\cH\vert\right)$.
    \item \textbf{Fairness}: $\sum_{t=1}^T\text{Unfair}_\alpha^t(\pi^t) \leq \mathcal{O}\left(\alpha^{-1}s^\frac{3}{2}k^\frac{9}{2}T^{\frac{5}{6}}\log\vert\cH\vert\right)$.
\end{enumerate}
While only requiring $\tilde{\mathcal{O}}({\alpha^{-2}} + k^2T^\frac{1}{3})$ calls to a batch ERM optimization oracle every round.
\end{theorem}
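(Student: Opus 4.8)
\textbf{The plan} is to mirror the structure of the proof of Theorem \ref{thm:main}, replacing the use of the full-information Context-FTPL regret bound (Theorem 2 of \citet{SyrgkanisKS16}) with the corresponding regret guarantee for Context-Semi-Bandit-FTPL, and then re-tracing the saddle-point argument with the new parameter settings. The crucial structural facts carry over unchanged: the Lagrangian loss is linear in $\pi$, so competing against $\Delta\cH$ rather than $\cH$ is unaffected; Lemma \ref{lem:baseline} and Lemma \ref{lem:unfairness} hold verbatim since they only require monotonicity of the auditing schemes; and Lemma \ref{lem:approx-policy} still guarantees, with the same $R = 64\epsilon^{-2}\log(2kT/\delta)$, that $\tilde\pi^t$ approximates $\pi^t$ to within $\epsilon/8$ on $\bar x^t$. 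Note the $R$ resamples $h^{t_1},\dots,h^{t_R}$ used for constraint elicitation are drawn \emph{without} loss estimation, so they are cheap and do not interact with the semi-bandit machinery.

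First I would instantiate the Context-Semi-Bandit-FTPL regret bound in place of Inequality \ref{Eq:FTPL}. This is the step that changes substantively: the semi-bandit regret bound from \citet{SyrgkanisKS16} replaces the clean $4\omega k s \sum_t \Vert\bar L^t\Vert_*^2 + \frac{10}{\omega}s^{1/2}k^{1/2}\log\vert\cH\vert$ bound with one carrying the extra geometric-resampling variance terms, which is where the number-of-individuals dependence worsens (to $k^{11/2}$ and $k^{9/2}$) and where the parameters $M = 16k/(\mu e)$ and $\omega = \mu^2 e/(512 s k^4)$ enter. I would track the scale of the per-round loss via $\Vert\bar L^t\Vert_* \leq 1 + \lambda^t$ exactly as in transition $(ii)$, and verify that the resampling bias introduced by truncating geometric resampling at $M$ samples contributes only lower-order terms, using the standard fact that the bias of the geometric-resampling estimator decays like $e^{-\text{const}\cdot M/k}$, which the choice of $M$ is calibrated to kill.

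Next I would reuse the Online Gradient Descent bound (Inequality \ref{Eq:GD}) verbatim for the dual player, combine it with the semi-bandit primal bound, and re-run transitions $(i)$--$(iv)$ of the proof of Theorem \ref{thm:main}. The regret bound then follows by setting $\pi = \pi^*$, $\lambda = 0$ and invoking Lemma \ref{lem:baseline} to discard the comparator's unfairness term; the violation bound follows by the same choice $\lambda = \frac{\sum_t \overline{\text{Unfair}}^t(\pi^t)}{\mu T + 1/\mu}$, yielding the quadratic inequality \ref{eq:violation} in $\sum_t \overline{\text{Unfair}}^t(\pi^t)$, and finally Lemma \ref{lem:unfairness} converts the proxy-loss bound into the true unfairness bound with the $2/\epsilon'$ factor. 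The only genuinely new arithmetic is substituting $\mu = T^{-1/3}$ (rather than $T^{-1/2}$) and the worse $k$- and $s$-powers from the semi-bandit bound, giving $\mathcal{O}(T^{2/3})$ regret and $\mathcal{O}(T^{5/6})$ violations.

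\textbf{The main obstacle} I expect is the semi-bandit loss-estimation step. Unlike the full-information case, here the learner may not observe her own loss on negatively-predicted coordinates, so the primal regret must be controlled for the geometric-resampling \emph{estimated} losses rather than the true losses. I would need to argue that (a) the estimator is unbiased (or has controllably small bias after truncation at $M$) so that its expectation reconstructs the true error loss, and (b) its variance, which scales with the inverse probability of a positive prediction and hence can be large, is tamed by the choice of $\omega$ and $M$ so that the variance contribution stays lower-order in $T$. Establishing that the saddle-point telescoping still goes through when the primal player optimizes over estimated losses --- while the dual gradient $\nabla_\lambda L^t = \overline{\text{Unfair}}^t(\pi^t) - \mu\lambda^t$ still uses the \emph{elicited} (and hence exactly observed) unfairness feedback --- is where the care is required, together with verifying the per-round oracle count: $\tilde{\mathcal{O}}(\alpha^{-2})$ for the $R$ elicitation draws plus $kM = \tilde{\mathcal{O}}(k^2 T^{1/3})$ for the resampling, matching the claimed $\tilde{\mathcal{O}}(\alpha^{-2} + k^2 T^{1/3})$.
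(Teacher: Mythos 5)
Your proposal follows essentially the same route as the paper's proof: swap Theorem 2 of \citet{SyrgkanisKS16} for Theorem 3 (Context-Semi-Bandit-FTPL) to get the primal bound with the extra $4\omega s k^3 M \sum_t \Vert\bar L^t\Vert_*^2 + \frac{k}{eM}\sum_t \E[\Vert\bar L^t\Vert_*^2]$ terms, reuse the OGD dual bound unchanged (justified exactly as you say, since $\nabla_\lambda L^t$ depends only on the elicited violation feedback, not on labels), re-run transitions $(i)$--$(iv)$ with $M = 16k/(\mu e)$, $\omega = \mu^2 e/(512 s k^4)$, $\mu = T^{-1/3}$, and close with Lemmas \ref{lem:baseline}, \ref{lem:approx-policy}, and \ref{lem:unfairness} and the same choices of $\pi = \pi^*$, $\lambda$. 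The "obstacle" you flag (bias/variance of truncated geometric resampling) is in fact already absorbed into the cited semi-bandit regret bound — the $\frac{k}{eM}$ term is precisely the truncation bias — so the paper, like you, needs no argument beyond invoking that theorem and calibrating $M$.
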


\begin{remark}
More generally, one can interpolate between the objectives to achieve any point in the $(\mathcal{O}(T^{\frac{2}{3}+2b}),\mathcal{O}(T^{\frac{5}{6}-b}))$ frontier of regret, number of violations, for $0\leq b \leq 1/6$. In particular, it is possible to obtain a uniform upper bound of $\mathcal{O}(T^{\frac{7}{9}})$ for each of accuracy, fairness. Our bound uniformly improve on the formerly best known upper bound of $O(T^{\frac{41}{45}})$ in \citet{BechavodRoth23}, while also reducing the per-round oracle complexity from $T^\frac{38}{45}$ to $\tilde{\mathcal{O}}({\alpha^{-2}} + k^2T^\frac{1}{3})$. In fact, our partial information bound even matches the \emph{full information} bound by \cite{BechavodJW20}.
\end{remark}

Finally, we prove the guarantees obtained by Algorithm \ref{alg:partial}, as stated in Theorem \ref{thm:main-partial}.

\begin{proof} [Proof of Theorem \ref{thm:main-partial}]
Set $R = \frac{64\log\left(\frac{2kT}{\delta}\right)}{\epsilon^2}$, and denote $\epsilon' = \frac{\epsilon}{2}$.

Using the analysis of Theorem 3 from \citet{SyrgkanisKS16} for the small separator setting, along with the fact that the Lagrangian loss (Definition \ref{def:Lagrangian}) is linear in the first argument, we know that for all $\pi\in\Delta\cH$,
\begin{align}
\sum_{t=1}^T L^t_{\tilde{\pi}^t,\alpha-\epsilon',\epsilon'}(\pi^t,\lambda^t) - \sum_{t=1}^T L^t_{\tilde{\pi}^t,\alpha-\epsilon',\epsilon'}(\pi,\lambda^t) 
&=
\sum_{t=1}^T \bar{L}^t_{\tilde{\pi}^t,\alpha-\epsilon',\epsilon'}(\pi^t,\lambda^t) - \sum_{t=1}^T \bar{L}^t_{\tilde{\pi}^t,\alpha-\epsilon',\epsilon'}(\pi,\lambda^t)\notag\\
&\leq
4\omega s k^3 M \sum_{t=1}^T \Vert \bar{L}^t\Vert_*^2 + \frac{10}{\omega} s^\frac{1}{2} k^\frac{1}{2}\log\vert\cH\vert + \frac{k}{eM} \sum_{t=1}^T \E \left[\Vert \bar{L}^t\Vert_*^2\right].\label{Eq:Bandit-FTPL}
\end{align}

And also importantly note that the partial derivative with respect to the second argument has no dependence on the labels:
\[
\frac{\partial L^t(\pi,\lambda)}{\partial\lambda} = \overline{\text{Unfair}}^t_{\tilde{\pi}^t,\alpha-\epsilon',\epsilon'}(\pi) - \mu\lambda.
\]

Hence the algorithm of \citet{Zinkevich03} can run precisely as in the full information setting.

Combining inequalities \ref{Eq:Bandit-FTPL} and \ref{Eq:GD}, for all $\pi\in\Delta\cH, \lambda\in\mathbb{R}^+$,
\begin{align}
&\sum_{t=1}^T L^t_{\tilde{\pi}^t,\alpha-\epsilon',\epsilon'}(\pi^t,\lambda) - \sum_{t=1}^T L^t_{\tilde{\pi}^t,\alpha-\epsilon',\epsilon'}(\pi,\lambda^t)\notag\\
&\leq 4\omega s k^3 M \sum_{t=1}^T \Vert \bar{L}^t\Vert_*^2 + \frac{10}{\omega} s^\frac{1}{2} k^\frac{1}{2}\log\vert\cH\vert + \frac{k}{eM} \sum_{t=1}^T \E \left[\Vert \bar{L}^t\Vert_*^2\right] + \frac{1}{2\mu}\lambda^2 + \frac{\mu}{2}\sum_{t=1}^T\left[ \overline{\text{Unfair}}^t_{\tilde{\pi}^t,\alpha-\epsilon',\epsilon'}(\pi^t)\right]^2 + \frac{\mu^3}{2}\sum_{t=1}^T(\lambda^{t})^2. \label{eq:partial-combined}
\end{align}

Next, for all $\pi\in\Delta\cH, \lambda\in\mathbb{R}^+$,
\begin{align}
&\sum_{t=1}^T \frac{1}{k}\cdot\text{Error}^t(\pi^t)  - \sum_{t=1}^T \frac{1}{k}\cdot\text{Error}^t(\pi) + \sum_{t=1}^T \lambda\cdot \overline{\text{Unfair}}^t_{\tilde{\pi}^t,\alpha-\epsilon',\epsilon'}(\pi^t) - \sum_{t=1}^T \lambda^t\cdot \overline{\text{Unfair}}^t_{\tilde{\pi}^t,\alpha-\epsilon',\epsilon'}(\pi)\notag\\
&\stackrel{(i)}{\leq}
4\omega s k^3 M \sum_{t=1}^T \Vert \bar{L}^t\Vert_*^2 + \frac{10}{\omega} s^\frac{1}{2} k^\frac{1}{2}\log\vert\cH\vert + \frac{k}{eM} \sum_{t=1}^T \E \left[\Vert \bar{L}^t\Vert_*^2\right] + \frac{1}{2\mu}\lambda^2 + \frac{\mu}{2}\sum_{t=1}^T\left[ \overline{\text{Unfair}}^t_{\tilde{\pi}^t,\alpha-\epsilon',\epsilon'}(\pi^t)\right]^2\notag\\
&+ \frac{\mu^3}{2}\sum_{t=1}^T(\lambda^{t})^2 + \frac{\mu}{2}T\lambda^2 - \frac{\mu}{2}\sum_{t=1}^T (\lambda^t)^2\notag\\
&\stackrel{(ii)}{\leq}
\frac{\mu}{16} \sum_{t=1}^T \left[1+(\lambda^t)\right]^2 + \frac{5120}{\mu^2 e} s^\frac{3}{2} k^\frac{9}{2} \log\vert\cH\vert + \frac{\mu}{16} \sum_{t=1}^T \left[1+(\lambda^t)\right]^2 + \frac{1}{2}\left[\mu T +\frac{1}{\mu}\right]\lambda^2\notag\\
&+ \frac{\mu}{2}\sum_{t=1}^T\left[ \overline{\text{Unfair}}^t_{\tilde{\pi}^t,\alpha-\epsilon',\epsilon'}(\pi^t)\right]^2 + \left[\frac{\mu^3}{2} - \frac{\mu}{2}\right]\sum_{t=1}^T (\lambda^t)^2\notag\\
&\stackrel{(iii)}{\leq}
\frac{\mu}{4} T + \frac{1884}{\mu^2} s^\frac{3}{2} k^\frac{9}{2} \log\vert\cH\vert + \frac{1}{2}\left[\mu T +\frac{1}{\mu}\right]\lambda^2 + \frac{\mu}{2}\sum_{t=1}^T\left[ \overline{\text{Unfair}}^t_{\tilde{\pi}^t,\alpha-\epsilon',\epsilon'}(\pi^t)\right]^2 + \left[\frac{\mu^3}{2}-\frac{\mu}{4}\right]\sum_{t=1}^T (\lambda^t)^2\notag\\
&\stackrel{(iv)}{\leq}
\frac{\mu}{4} T + \frac{1884}{\mu^2} s^\frac{3}{2} k^\frac{9}{2} \log\vert\cH\vert +
\frac{1}{2}\left[\mu T +\frac{1}{\mu}\right]\lambda^2 + \frac{\mu}{2}\sum_{t=1}^T\left[ \overline{\text{Unfair}}^t_{\tilde{\pi}^t,\alpha-\epsilon',\epsilon'}(\pi^t)\right]^2. \label{eq:partial-simplified}
\end{align}

Where the transitions are:

$(i)$ Using Equation \ref{eq:partial-combined} and Definition \ref{def:Lagrangian}.

$(ii)$ Using the fact that $\Vert \bar{L}^t\Vert_* \leq 1+\lambda^t$, and setting $M = \frac{16k}{\mu e}$, $\omega = \frac{\mu^2 e}{512sk^4}$.

$(iii)$ Since $\forall a,b\in\mathbb{R}^+:2ab \leq a^2 + b^2$.

$(iv)$ For $0\leq\mu\leq \frac{1}{\sqrt{2}}$, we know that $\frac{\mu^3}{2} -\frac{\mu}{4} \leq 0$.

~

To upper bound regret, we set $\pi = \pi^*$, $\lambda = 0$. Using Lemma \ref{lem:baseline}, we know, for all $t\in[T]$, that $\overline{\text{Unfair}}^t_{\tilde{\pi}^t,\alpha-\epsilon',\epsilon'}(\pi^*) \leq 0$. Using Lemma \ref{lem:approx-policy} along with the triangle inequality, we know that with probability $1-\delta$, simultaneously for all $t\in[T]$, $\overline{\text{Unfair}}^t_{\tilde{\pi}^t,\alpha-\epsilon',\epsilon'}(\pi^t) \geq \frac{\epsilon}{4}$. Hence, $\overline{\text{Unfair}}^t_{\tilde{\pi}^t,\alpha-\epsilon',\epsilon'}(\pi^t) - \sum_{t=1}^T \lambda^t\cdot \overline{\text{Unfair}}^t_{\tilde{\pi}^t,\alpha-\epsilon',\epsilon'}(\pi) \geq 0$. Using Equation \ref{eq:partial-simplified}, we get

\begin{equation}\label{Eq:regret-Bandit}
\sum_{t=1}^T \text{Error}^t(\pi^t)  - \sum_{t=1}^T \text{Error}^t(\pi^*) 
\leq 
k\left[\frac{\mu}{4} T + \frac{1884}{\mu^2} s^\frac{3}{2} k^\frac{9}{2} e \log\vert\cH\vert + \frac{\mu}{2}\sum_{t=1}^T \left[\overline{\text{Unfair}}^t_{\tilde{\pi}^t,\alpha-\epsilon',\epsilon'}(\pi^t)\right]^2\right].
\end{equation}

To upper bound the number of fairness violations, first note that 
\[
\sum_{t=1}^T \frac{1}{k}\cdot\text{Error}^t(\pi^t)  - \sum_{t=1}^T \frac{1}{k}\cdot\text{Error}^t(\pi) \geq -\frac{T}{k}.
\]

and that with probability $1-\delta$, (see Lemma \ref{lem:approx-policy}), 
\[
\forall t\in[T]: \overline{\text{Unfair}}^t_{\tilde{\pi}^t,\alpha-\epsilon',\epsilon'}(\pi^t) = \left[\pi^t(\rho^1)-\pi^t(\rho^2)\right] -  \left[\tilde{\pi}^t(\rho^1) - \tilde{\pi}^t(\rho^2)\right] + \epsilon' \geq - \frac{\epsilon'}{2} + \epsilon' = \frac{\epsilon'}{2} > 0. 
\]

We proceed to select
\[
\lambda = \frac{\sum_{t=1}^T \overline{\text{Unfair}}^t_{\tilde{\pi}^t,\alpha-\epsilon',\epsilon'}(\pi^t)}{\mu T + \frac{1}{\mu}},
\]
and again set $\pi = \pi^*$. Using Equation \ref{eq:partial-simplified},
\begin{equation}\label{Eq:violation-bandit}
\left[\sum_{t=1}^T \overline{\text{Unfair}}^t_{\tilde{\pi}^t,\alpha-\epsilon',\epsilon'}(\pi^t)\right]^2 
\leq 2\left[\mu T + \frac{1}{\mu}\right] \left[\frac{\mu}{4} T + \frac{1884}{\mu^2} s^\frac{3}{2} k^\frac{9}{2} e \log\vert\cH\vert + \frac{\mu}{2}\sum_{t=1}^T \left[\overline{\text{Unfair}}^t_{\tilde{\pi}^t,\alpha-\epsilon',\epsilon'}(\pi^t)\right]^2 + \frac{T}{k}\right].
\end{equation}

Next, we set $\mu = T^{-\frac{1}{3}}$. To upper bound regret, using Equation \ref{Eq:regret-Bandit},
\[
\sum_{t=1}^T \cdot\text{Error}^t(\pi^t)  - \sum_{t=1}^T \cdot\text{Error}^t(\pi^*) 
\leq 
\mathcal{O}\left(s^\frac{3}{2}k^\frac{11}{2}T^\frac{2}{3}\log\vert\cH\vert\right).
\]
To upper bound the number of fairness violations, using Equation \ref{Eq:violation-bandit} and Lemma \ref{lem:unfairness},
\[
\sum_{t=1}^T\text{Unfair}_{\alpha}^t({\pi}^t) \leq \mathcal{O}\left(\frac{1}{\epsilon}s^\frac{3}{2}k^\frac{9}{2}T^\frac{5}{6}\log\vert\cH\vert\right).
\]

In closing, note that our selection of $M,\mu$ implies, according to Theorem 3 from \citet{SyrgkanisKS16} and our reduction, that the per-round number of calls to the optimization oracle is  $64\epsilon^{-2}\log\left(\frac{2kT}{\delta}\right) + 16e^{-1}k^2T^\frac{1}{3}$. Finally, note that for Theorem \ref{thm:main-partial}, we selected $\epsilon = \alpha$. This concludes the proof.
\end{proof}
\section{Conclusion and Future Directions} \label{sec:conclusion}

One limitation of our approach is that it is guaranteed to run efficiently only for classes for which one can pre-compute a small separating set. However, this limitation is not unique to our setting, and is prevalent more generally in the context of adversarial online learning. Another limitation is that we can only compete with a slightly relaxed baseline (in terms of violation size). It would be interesting to explore ways to extend our approach to compete with the class of fair policies where no such relaxation is required (and one can even potentially select $\alpha = 0$). Finally, proving non-trivial lower bounds in our setting is also a very interesting problem. To gain some intuition --- a ``trivial'' policy (constant predictor) can (naively) never violate fairness, but induces linear regret. A non-constant policy, however, must risk violating fairness, as both the fairness metric and labels aren't initially known. One might then be inclined to ask, for algorithms that obtain a non-trivial regret bound $O(T^a)$ for $a<1$, what level of fairness constraint violation is unavoidable?
\section{Acknowledgements}
The author wishes to thank Aaron Roth, Michael Kearns, and Georgy Noarov for useful discussions at an early stage of this work, Guy Rothblum for an observation leading to the insight in footnote \ref{fn:unobservable}, and Rabanus Derr, Georgy Noarov, and Mirah Shi for providing valueable feedback on the manuscript. YB is supported in part by the Israeli Council for Higher
Education Postdoctoral Fellowship.

\bibliographystyle{apalike}
\bibliography{refs}

\newpage
\appendix

\section{Extended Related Work} \label{app:extended}
In the context of individual fairness, \citet{JosephKMR16,JosephKMNR18} study a contextual bandit setting, with a notion of individuals fairness where the assigned probabilities to individuals must be monotone in their (true) labels. Given the strength of this requirement, they only prove positive results under strong realizability assumptions. \citet{GuptaK19} study a time-dependent variant of individual fairness they term \emph{fairness in hindsight}. \citet{LahotiGW19} study methods of generating individually fair representations. \citet{YurochkinBS20} suggest learning predictors that are invariant to certain perturbations of sensitive attributes. \citet{MukherjeeYBS20} suggest ways to learn fairness metrics from data. \citet{YurochkinS21} explore a variant of individual fairness that enforces invariance on certain sensitive sets. \citet{VargoZYS21} suggests a gradient-based approach to learning predictors that obey a variant of individual fairness requiring robustness with respect to sensitive attributes. \citet{ZhangCS23} suggests pre-processing data using a matrix estimation method, and explores conditions under which it results in individual fairness guarantees.

\section{Proofs from Section \ref{sec:online}} \label{app:proofs}

\begin{proof} [Proof of Lemma \ref{lem:baseline}]
Fix any $t\in[T]$, and let $\pi\in\Delta\cH_{\alpha-\epsilon}^{fair}(\Psi^t)$. If $\vec{\cS}^t(\tilde{\pi}^t,\bar{x}^t,\alpha-\epsilon') = (\bar{x}^{t,l},\bar{x}^{t,r})$, since $\cS^t$ is a monotone auditing scheme, using Lemma \ref{lem:monotone}, there exists $i^* = i^*(f^t,\bar{j}^t,(\bar{x}^{t,l},\bar{x}^{t,r}))\in\{0\}\cup[m]$ such that
\[
\forall \pi' \in \Delta\cH, \alpha' \in (0,1]: \vec{\cS}^t(\pi',(\bar{x}^{t,l},\bar{x}^{t,r}),\alpha') = 
\vec{j}^{t,i^*}(\pi',(\bar{x}^{t,l},\bar{x}^{t,r}),\alpha').
\]

Hence, using Definition \ref{def:unfair-proxy},
\begin{align*}
\overline{\text{Unfair}}^t_{\tilde{\pi}^t,\alpha-\epsilon',\epsilon'}(\pi) 
&= \left[\pi(\bar{x}^{t,l})-\pi(\bar{x}^{t,r})\right] -  \left[\tilde{\pi}^t(\bar{x}^{t,l}) - \tilde{\pi}^t(\bar{x}^{t,r})\right] + \frac{\epsilon}{2} \\
&\leq \left[d^{t,i^*}(\bar{x}^{t,l},\bar{x}^{t,r}) + \alpha - \epsilon\right] - \left[d^{t,i^*}(\bar{x}^{t,l},\bar{x}^{t,r}) + \alpha-\frac{\epsilon}{2}\right] + \frac{\epsilon}{2}\\
&= 0.
\end{align*}
Where the inequality stems by combining Definition \ref{def:comparator} and Lemma \ref{lem:monotone}. 

Otherwise, $\vec{\cS}^t(\tilde{\pi}^t,\bar{x}^t,\alpha-\epsilon') = Null$, and $\overline{\text{Unfair}}^t_{\tilde{\pi}^t,\alpha-\epsilon',\epsilon'}(\pi) = 0$.

This concludes the proof.
\end{proof}

\begin{proof}[Proof of Lemma \ref{lem:approx-policy}]
Fix $t\in[T], i\in[k]$. Using an additive Chernoff bound,
\[
\Pr\left[\left\vert\pi^t(\bar{x}^{t,i}) - \tilde{\pi}^t(\bar{x}^{t,i})\right\vert\geq \sqrt{\frac{\log\left(\frac{2kT}{\delta}\right)}{2R}}\right] \leq \frac{\delta}{kT}.
\]

The statement then follows by taking a union bound over all $t\in[T], i\in[k]$.
\end{proof}

\end{document}